\theoremstyle{plain}
\newtheorem{theorem}{Theorem}[section]
\newtheorem{lemma}[theorem]{Lemma}
\theoremstyle{definition}
\newtheorem{assumption}[theorem]{Assumption}
\theoremstyle{remark}
\newtheorem{remark}[theorem]{Remark}
\DeclareMathOperator{\Om}{\Omega}
\DeclareMathOperator{\prob}{\mathbb{P}}
\DeclareMathOperator{\expected}{\mathbb{E}}
\DeclareMathOperator{\sigmaal}{\mathcal{F}}
\DeclareMathOperator{\D}{\mathcal{D}}
\DeclareMathOperator{\Hy}{\mathcal{H}}
\DeclareMathOperator{\X}{\mathcal{X}}
\DeclareMathOperator{\Y}{\mathcal{Y}}
\DeclareMathOperator{\esS}{\mathcal{S}}
\DeclareMathOperator{\borel}{\mathcal{B}}
\newcommand{\mb}{\mathbf}
\title{Partial-Label Learning with Conformal Candidate Cleaning}
\author{\href{mailto:<tobias.fuchs@kit.edu>?Subject=Partial-Label Learning with Conformal Candidate Cleaning}{Tobias~Fuchs}{}}
\author{Florian~Kalinke}
\affil{Information Systems, Karlsruhe Institute of Technology, Karlsruhe, Germany}
\begin{document}
\maketitle

\begin{abstract}
    Real-world data is often ambiguous; for example, human annotation produces instances with multiple conflicting class labels.
    Partial-label learning (PLL) aims at training a classifier in this challenging setting, where each instance is associated with a set of candidate labels and one correct, but unknown, class label.
    A multitude of algorithms targeting this setting exists and, to enhance their prediction quality, several extensions that are applicable across a wide range of PLL methods have been introduced.
    While many of these extensions rely on heuristics, this article proposes a novel enhancing method that incrementally prunes candidate sets using conformal prediction.
    To work around the missing labeled validation set, which is typically required for conformal prediction, we propose a strategy that alternates between training a PLL classifier to label the validation set, leveraging these predicted class labels for calibration, and pruning candidate labels that are not part of the resulting conformal sets.
    In this sense, our method alternates between empirical risk minimization and candidate set pruning.
    We establish that our pruning method preserves the conformal validity with respect to the unknown ground truth.
    Our extensive experiments on artificial and real-world data show that the proposed approach significantly improves the test set accuracies of several state-of-the-art PLL classifiers.
\end{abstract}

%
%

\section{Introduction}
\label{sec:intro}
Real-world data is often noisy and ambiguous.
In crowdsourcing, for example, different annotators can assign several conflicting class labels to the same instance.
Other examples with ambiguous data include web mining \citep{GuillauminVS10,ZengXJCGXM13} and audio classification \citep{BriggsFR12}.
While such datasets can be manually cleaned, sanitizing data is costly, especially for large-scale datasets.
Partial-label learning (PLL; \citealt{JinG02,LvXF0GS20,XuQGZ21,crosel2024}) provides a principled way of dealing with such conflicting data.
More specifically, in PLL, instances are annotated with sets of candidate labels of which only one unknown label is the correct class label.
PLL permits training a multi-class classifier in this weakly-supervised setting.

Many algorithms targeting the PLL problem exist.
Recently, several extensions \citep{BaoHZ21,BaoHZ22,WangZ22,ZhangWB22,0009LLQG23} that can be combined with a wide range of PLL methods have been proposed, which aim at further improving their predictive performance.
Typically, different PLL classifiers perform best on different datasets.
In this sense, having extensions that are applicable to a multitude of different PLL algorithms is extremely beneficial.
These extensions include feature selection and candidate cleaning techniques, which clean the instance space and candidate label space, respectively.
However, many of these extensions depend on heuristics.

In contrast, this article proposes a novel method that alternates between training a PLL classifier through empirical risk minimization and pruning the candidate sets using conformal prediction,
which output sets of possible labels that contain the correct label with a specified confidence level \citep{lei14,sadinle19}.
In our pruning step, we remove candidate labels if they are not part of these predicted conformal sets.
This principled way of reducing the candidate set ambiguity benefits the training of the PLL classifier when compared to the existing heuristic thresholds.
Our extension significantly improves the prediction quality of several state-of-the-art PLL methods across a variety of datasets and experimental settings.
To guarantee the validity of the conformal classifier used in the pruning step, one usually requires a labeled validation set for the calibration of the coverage guarantee.
In the PLL setting, however, ground truth is unavailable.
To resolve this serious issue, we propose a strategy that trains a PLL classifier, uses its predictions to label the validation set, calibrates the conformal sets with the validation set, and prunes candidate labels that are not part of these conformal sets.
We show that our method preserves the conformal validity with respect to the unknown ground truth.

Our \textbf{contributions} can be summarized as follows.
\begin{itemize}
    \item \emph{Algorithm.} We propose a novel candidate cleaning method that alternates between training a PLL classifier and pruning the PLL candidate sets.
          Our algorithm significantly improves the predictive performance of several state-of-the-art PLL approaches.
    \item \emph{Experiments.} Extensive experiments on artificial and real-world partially labeled data support our claims.
          An ablation study further demonstrates the usefulness of the proposed strategy.
          We make our source code and data openly available at {\small \url{https://github.com/mathefuchs/pll-with-conformal-candidate-cleaning}}.
    \item \emph{Theoretical analysis.} We analyze our method and show that the pruning step yields valid conformal sets.
\end{itemize}

\paragraph{Structure of the paper.}
Section~\ref{sec:notations} establishes our notations and states the partial-label learning problem,
Section~\ref{sec:rw} discusses related work,
Section~\ref{sec:main} details our contributions,
and Section~\ref{sec:exp} shows our experimental setup and results.
All proofs are deferred to Appendix~\ref{sec:proofs}.
Appendix~\ref{sec:hypar} lists all hyperparameters used within our experiments in detail and
Appendix~\ref{sec:more-exp} contains additional experiments.

\section{Notations}
\label{sec:notations}
This section establishes notations used throughout our work as well as states the partial-label learning problem.

Given a $d$-dimensional real-valued feature space $\X = \mathbb{R}^d$ and a set $\Y = [k] := \{ 1, \dots, k \}$ of $3 \leq k \in \mathbb{N}$ classes, a partially-labeled training dataset $\D = \{ (x_i, s_i) \in \X \times 2^{\Y} : i \in [n] \}$ contains $n$ training instances with associated feature vectors $x_i \in \X$ and candidate labels $s_i \subseteq \Y$ for each $i \in [n]$.
Their respective ground-truth labels $y_i \in \Y$ are unknown during training, but $y_i \in s_i$.
We split the dataset $\D$ into a training set $\D_t$ and a dataset $\D_v$ for calibration.

Let $\Om = \X \times \Y \times 2^{\Y}$.
Underlying partial-label learning (PLL) is the probability triplet $(\Om, \borel( \Om ), \prob)$ with $\borel$ denoting the Borel $\sigma$-algebra.
We denote by $X : \Om \to \X$, $Y : \Om \to \Y$, and $S : \Om \to 2^{\Y}$ the random variables governing the occurrence of an instance's features, ground-truth label, and its candidate labels, respectively.
Their realizations are denoted by $x_i$, $y_i$, and $s_i$.
We denote by $\prob_X$ the marginal and by $\prob_{XY}$ and $\prob_{XS}$ the joint distribution of $(X, Y)$ and $(X, S)$, respectively.
$\prob_{XY}$ coincides with the probability measure usually underlying the supervised setting.
We denote with $\prob_n := \prob_{XS}^n$ the $n$-fold product of $\prob_{XS}$.
The cumulative distribution function of the random variable $X$ is $F_X(t) = \prob_X(X \leq t)$ and its empirical counterpart is $\hat{F}_X(t) = \frac{1}{n} \sum_{i=1}^n \mathds{1}_{\{ X_i \leq t \}}$, where $X_i, \dots, X_n \overset{i.i.d.}{\sim} \prob_X$.

Let $\ell : [0, 1]^k \times \Y \to \mathbb{R}_{\geq 0}$ denote a measurable loss function, e.g., the log-loss.
PLL aims to train a probabilistic classifier $f : \X \to [0, 1]^k$ with $\sum_{j = 1}^k f_j(x) = 1$, for $x \in \X$, that minimizes the risk $R(f) = \expected_{XS}[ \sum_{y = 1}^{k} W_{X,S,y} \, \ell(f(X), y) ]$, where $W_{X,S,y}$ are label weights to control the influence of different loss terms.
$f_{y}(x)$ denotes the $y$-th entry of the vector $f(x) \in [0, 1]^k$.

Common instantiations for $W_{X,S,y}$ include the average strategy $W_{X,S,y}^{(\operatorname{avg})} = \mathds{1}_{\{y \in S\}} / |S|$ \citep{HullermeierB06,CourST11} and the minimum strategy
\begin{equation}
    W_{X,S,y}^{(\operatorname{min})} = \frac{\prob_{Y \mid X}(Y = y)}{\sum_{j \in S} \prob_{Y \mid X}(Y = j)}
    \label{eq:min-loss}
\end{equation}
\citep{LvXF0GS20,FengL0X0G0S20},
which weights the loss based on the relevancy of each label.

For the minimum strategy in~\eqref{eq:min-loss}, the true risk takes the form
\begin{equation}
    \hspace{-0.01cm} R(f) = \expected_{XS}\!\big[ \sum_{y = 1}^k \frac{\prob_{Y \mid X}(Y = y)}{\sum_{j \in S} \prob_{Y \mid X}(Y = j)} \ell(f(X), y) \big] \text{.}
    \label{eq:true-risk}
\end{equation}
The empirical version of the risk is obtained by substituting the expectation with a sample mean:
\begin{align}
    \hat{R}(f) = \frac{1}{n} \sum_{i = 1}^{n} \sum_{y = 1}^k w_{ij} \ell(f(x_i), y) \text{,} \label{eq:erm}
\end{align}
where $(x_i, s_i) \in \D$ and $w_{ij} \in [0, 1]$ approximates the label relevancy $W_{X,S,y}^{(\operatorname{min})}$ in~\eqref{eq:min-loss} using
\begin{equation}
    w_{ij} = \begin{cases}
        f_j(x_i) / \sum_{j' \in s_i} f_{j'}(x_i) & \text{if $j \in s_i$,} \\
        0                                        & \text{else,}
    \end{cases}
    \label{eq:label-update}
\end{equation}
using a trained classifier $f : \X \to [0, 1]^k$.

Let $\Hy = \{ f : \X \to [0, 1]^k \mid f \text{ measurable, } \forall x \in \X : \sum_{j=1}^k f_j(x) = 1 \}$ denote the hypothesis space, $f^* = \arg\min_{f \in \Hy} R(f)$ the true risk minimizer, and $\hat{f} = \arg\min_{f \in \Hy} \hat{R}(f)$ the empirical risk minimizer.
An optimal multi-class classifier must be of the form $f^*_y(x) = \prob_{Y \mid X = x}(Y = y)$ \citep[Lemma~1]{YuLGT18}.
We make the common assumption that the hypothesis space $\Hy$ is well-specified, that is, $f^* \in \Hy$ \citep{tsybakov2004optimal,ErvenGMRW15}.
The class label of each instance $x \in \X$ with the highest probabilistic prediction, that is, $\hat{y}_x = \arg\max_{y \in \Y} \hat{f}_y(x)$, is called \emph{pseudo-label}.

\section{Existing Work}
\label{sec:rw}
Partial-label learning is one out of many weakly-supervised learning frameworks \citep{Bylander94,HadyS13,IshidaNMS19}, where training instances are annotated with multiple candidate labels.
Section~\ref{sec:rw-pll} discusses related work regarding partial-label learning and
Section~\ref{sec:rw-sets} discusses related work regarding set-valued prediction-making, which is a natural fit for representing the ambiguity of the PLL candidate sets.

\subsection{Partial-Label Learning (PLL)}
\label{sec:rw-pll}
PLL is a weakly-supervised learning problem that has gained significant attention over the last decades.
Most approaches adapt common supervised classification algorithms to the PLL setting.
Examples include a logistic regression formulation \citep{grandvalet2002logistic},
expectation-maximization strategies \citep{JinG02,LiuD12},
nearest-neighbor methods \citep{HullermeierB06,ZhangY15a,fuchs2024partiallabel},
support-vector classifiers \citep{NguyenC08,CourST11,YuZ17},
custom stacking and boosting ensembles \citep{ZhangYT17,TangZ17,WuZ18},
and label propagation strategies \citep{ZhangY15a,ZhangZL16,XuLG19,WangLZ19,FengA19}.

Recent state-of-the-art methods \citep{LvXF0GS20,FengL0X0G0S20,XuQGZ21,ZhangF0L0QS22,WangXLF0CZ22,0009LLQG23,crosel2024} minimize variations of~\eqref{eq:erm} with the weights as in~\eqref{eq:label-update} using different deep learning approaches.
The minimum loss reweighs the loss terms to only include the most likely class labels.
\citet{GongB024} extend this idea by introducing a smoothing component.

\citet{LvXF0GS20,FengL0X0G0S20} iteratively refine the PLL candidate sets by alternating between training a model $f : \X \to [0, 1]^k$ using empirical risk minimization on~\eqref{eq:erm} and updating the label weights $w_{ij}$ in~\eqref{eq:label-update} using the trained classifier $f$.
At the beginning, the weights $w_{ij}$ are initialized with uniform weights on the respective candidate sets: $w_{ij} = 1/|s_i|$ if $j \in s_i$, else $0$, which coincides with the average strategy \citep{HullermeierB06,CourST11}.
They further show that the resulting classifier is risk consistent with the Bayes classifier $f^*$, if the small-ambiguity-degree condition holds \citep{CourST11,LiuD12}.
The condition requires that there is no incorrect label $\bar{y} \neq y$, which co-occurs with the correct label $y$ in a candidate set with a probability of one.
Formally, one imposes that
$\sup_{\substack{x \in \X, y \in \Y, \bar{y} \in \Y, \bar{y} \neq y}} \prob_{S \mid X = x, Y = y}( \bar{y} \in S ) < 1$.

Because of the huge variety of PLL methods, there are recent algorithms that can be combined with any of the above to improve prediction performance further.
\citet{WangZ22} propose a feature augmentation technique based on class prototypes and \citet{BaoHZ21,BaoHZ22,ZhangWB22} propose feature selection strategies for PLL data.
Existing state-of-the-art methods achieve significantly better accuracies when trained on these modified feature sets.

\citet{0009LLQG23} propose the method \textsc{Pop}, which gradually removes unlikely class labels from the candidate sets if the margin between the most likely and the second-most likely class label exceeds some heuristic threshold.
In contrast, our method gradually removes unlikely class labels based on the set-valued conformal prediction framework, which provides a more principled way of cleaning the candidate sets.
Our method significantly improves the test set accuracies of several state-of-the-art methods including the method \textsc{Pop}.

\subsection{Set-Valued Predictions}
\label{sec:rw-sets}
Recent methods in supervised multi-class classification \citep{lei14,barber2023conformal,MozannarLWSDS23,MaoM024,NarasimhanMJGK24} explore training set-valued predictors $C : \X \to 2^{\Y}$ rather than single-label classifiers $f : \X \to \Y$ as they offer more flexibility in representing the uncertainty involved in prediction-making.
Set-valued prediction-making involves a variety of problem formulations including reject options and conformal prediction.
Reject options allow one to abstain from individual predictions if unsure alleviating the cost of misclassifications;
see \citet{fuchs2024partiallabel} for a recent study of reject options in PLL.

In conformal prediction, classifiers output sets of class labels $C(x) \subseteq \Y$.
\emph{Valid} conformal predictors guarantee that
\begin{align}
    \prob_{XY}(Y \in C(X)) \geq 1 - \alpha \text{,}
    \label{eq:valid}
\end{align}
which means that the correct label is part of a conformal set with a given error level of at most $\alpha \in (0, 1)$.
The conformal predictor $C$ that outputs $C(x) = \Y$, for $x \in \X$, is trivially valid as it covers the correct label with a probability of one.
To avoid this case, one searches for conformal predictors $C$ with minimal expected cardinality $\expected_{X} |C(X)|$, while still being valid.
In the supervised setting, this is captured by the following optimization problem~\citep{sadinle19}:
\begin{align}
    \min_{C : \X \to 2^{\Y}}\, & \expected_{X} |C(X)| \text{,}   \label{eq:opt-supervised} \\
    \text{subject to}\quad     & \prob_{XY}( Y \in C(X) ) \geq 1 - \alpha \text{.}
\end{align}
Optimal solutions to~\eqref{eq:opt-supervised} are of the form $C(x) = \{ y \in \Y : \prob_{Y \mid X = x}(Y = y) \geq t_\alpha \}$, for $x \in \X$, where $t_\alpha$ is set to
\begin{align}
    t_\alpha = \sup \big\{ & t \in [0, 1] : \prob_{XY}\big[ (x, y) : \\ & \quad \prob_{Y \mid X = x}(Y = y) \geq t \big] \geq 1 - \alpha \big\} \text{,} \quad
    \label{eq:thresh}
\end{align}
where we assume that the quantile function of $\prob_{Y \mid X = x}(Y = y)$ is continuous at $t_\alpha$.\footnote{See \citet[Theorem~1]{sadinle19} for the general case.}
In practice, one approximates $t_\alpha$ by computing the empirical distribution function on a hold-out validation set.
One splits the dataset $\D$ into a dataset $\D_t$ for model training and $\D_v$ for calibrating the conformal predictor $C$ with respect to the confidence level $\alpha$.
The validation set $\D_v$ is assumed to be exchangeable with respect to the joint distribution $\prob_{XY}$.

Conformal prediction is also a natural fit to partial-label learning as both deal with sets of class labels.
\citet{JavanmardiSHH23} examine different ways of achieving valid conformal sets in the PLL context.
However, they do not propose any new PLL method against which we can compare.
Rather, they analyze the properties of different non-conformity measures in this context.
In contrast, our focus is on constructing new PLL methods and evaluating them.

In the following section, we propose a novel candidate cleaning method that is based on conformal prediction and adapts~\eqref{eq:opt-supervised} to the PLL setting to yield valid conformal sets.
The optimization problem~\eqref{eq:opt-supervised} cannot directly be transferred to the PLL context as ground truth for the calibration of the validity property is unavailable.
We propose a strategy that uses the PLL classifier $f$ to label the validation set and then leverages these pseudo-labels for calibration.
We show that this preserves the validity with respect to the ground truth.

\section{PLL with Conformal Cleaning}
\label{sec:main}
We propose a novel candidate cleaning strategy that iteratively cleans the candidate sets of the PLL dataset $\D$ by reducing the candidate set cardinalities.
Our method alternates between training a PLL classifier through empirical risk minimization and pruning the candidate sets based on conformal prediction.
Conformal predictors $C : \X \to 2^{\Y}$ cover the correct label $y_i$ of instance $x_i$ with a specified probability; see~\eqref{eq:valid}.
This coverage property is calibrated using a separate validation set of exchangeable PLL data points that are labeled using the trained PLL algorithm.
As the classifier can give wrong predictions, however, we propose a novel correction strategy that accounts for possible misclassifications when calibrating the coverage of the correct labels against the validation set, which maintains the validity guarantee.
We remove class labels from the candidate sets $s_i$ if they are not part of the predicted conformal set $C(x_i)$ since the correct label $y_i$ is in $C(x_i)$ with a specified confidence level.

This procedure iteratively removes noise from incorrect candidate labels, which benefits the training of the PLL classifier by having to account for less and less noise in each training step.
Many PLL algorithms \citep{LvXF0GS20,0009LLQG23,crosel2024} proceed in a similar manner.
They have in common that they alternate between training a PLL classifier and using its predictions to refine the candidate label weights.
This can equivalently be expressed from an expectation-maximization perspective \citep[Section~5]{WangXLF0CZ22}.
These label propagation strategies are state-of-the-art in many weakly-supervised learning domains.
In contrast to the existing heuristic update rules, however, our proposed method provides a principled way of iteratively cleaning the candidate sets using conformal predictors $C$.

In the following sections, we discuss our method in detail.
Section~\ref{sec:pll-validity} elaborates on the notion of conformal validity in the PLL context,
Section~\ref{sec:correction} details how to correct for the ambiguity in PLL compared to the supervised setting,
Section~\ref{sec:our-algorithm} outlines the proposed algorithm,
Section~\ref{sec:runtime} discusses the method's runtime complexity,
and Section~\ref{sec:connection-pop} discusses the placement of our method with respect to related work.

\subsection{PLL Validity}
\label{sec:pll-validity}
Since we use the conformal predictions $C(x_i)$ to clean the associated candidate sets $s_i$, for $(x_i, s_i) \in \D$, we require that $s_i \cap C(x_i)$ is nonempty with a specified confidence level as otherwise $C(x_i)$ does not contain the unknown correct label $y_i$.
Hence, we adapt~\eqref{eq:valid} to our setting and consider a conformal classifier $C$ valid with respect to the PLL candidate sets if it holds that
\begin{equation}
    \prob_{XS}( S \cap C(X) \neq \emptyset ) \geq 1 - \alpha \text{,}
    \label{eq:pll-valid}
\end{equation}
for a given error level $\alpha \in (0, 1)$.
In other words, conformal predictions $C(x_i)$ need to cover the observed ambiguously labeled candidate sets $s_i$ with a specified probability.
Recall that $C(x) = \Y$, for $x \in \X$, trivially satisfies~\eqref{eq:pll-valid}.
One therefore also wants to minimize the cardinalities $|C(x)|$.
Given the standard PLL assumption that the correct label $y_i$ is within the respective candidate set $s_i$, which implies that $\prob_{S \mid X = x, Y = y}(y \in S) = 1$ for any $(x, y) \in \X \times \Y$, an optimal solution to~\eqref{eq:opt-supervised} is also valid in the sense of~\eqref{eq:pll-valid}.
Theorem~\ref{prop:pll-valid} captures this relationship and underpins our proposed cleaning method, which we detail in Section~\ref{sec:our-algorithm}.

\begin{theorem}
    \label{prop:pll-valid}
    Assume that $\prob_{S \mid X = x, Y = y}(y \in S) = 1$, for any $(x, y) \in \X \times \Y$, and $\alpha \in (0, 1)$.
    Then, an optimal solution $C$ of~\eqref{eq:opt-supervised} satisfies~\eqref{eq:pll-valid}: $\prob_{XS}( S \cap C(X) \neq \emptyset ) \geq 1 - \alpha$.
\end{theorem}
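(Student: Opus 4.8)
The plan is to transfer the supervised coverage guarantee that characterizes an optimal solution of~\eqref{eq:opt-supervised} over to the PLL validity condition~\eqref{eq:pll-valid}, using the standing PLL assumption as the bridge between the two events. The whole argument should reduce to a single event-inclusion combined with the fact that the correct label lies in the candidate set almost surely; in particular, no new quantile or threshold computation is needed, so I would not revisit~\eqref{eq:thresh}.

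First I would record what being an optimal solution $C$ of~\eqref{eq:opt-supervised} provides: by feasibility of the constraint it satisfies the supervised validity $\prob_{XY}(Y \in C(X)) \geq 1 - \alpha$. Next I would convert the conditional hypothesis $\prob_{S \mid X = x, Y = y}(y \in S) = 1$ into the unconditional statement $\prob(Y \in S) = 1$. This is the one genuinely measure-theoretic step: integrate the conditional probability against $\prob_{XY}$ over all $(x, y)$, using that $X$, $Y$, and $S$ all live on the common space $(\Om, \borel(\Om), \prob)$, so that $\prob_{XY}$ and $\prob_{XS}$ are pushforwards of one joint law and the event $\{Y \in S\}$ is meaningful under that joint law.

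The core is then a pointwise implication: on the event $\{Y \in S\}$, whenever $Y \in C(X)$ we also have $Y \in S \cap C(X)$, hence $S \cap C(X) \neq \emptyset$. This yields the inclusion
\[
\{ Y \in C(X) \} \cap \{ Y \in S \} \subseteq \{ S \cap C(X) \neq \emptyset \} \text{.}
\]
Taking probabilities and discarding the $\prob$-null complement of $\{Y \in S\}$ gives $\prob_{XY}(Y \in C(X)) = \prob(Y \in C(X),\, Y \in S) \leq \prob_{XS}(S \cap C(X) \neq \emptyset)$. Chaining this with the supervised guarantee from the first step produces $\prob_{XS}(S \cap C(X) \neq \emptyset) \geq \prob_{XY}(Y \in C(X)) \geq 1 - \alpha$, which is exactly~\eqref{eq:pll-valid}.

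I do not expect a serious obstacle. The only point demanding care is the bookkeeping that moves from the conditional assumption to $\prob(Y \in S) = 1$ and that reads $\prob_{XY}$ and $\prob_{XS}$ as marginals of a single joint measure on $\Om$, so that the displayed event inclusion can legitimately be evaluated under one common $\prob$. Everything past that is monotonicity of probability applied to the containment.
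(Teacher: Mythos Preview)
Your argument is correct. Both your proof and the paper's rest on the same two ingredients---the feasibility constraint $\prob_{XY}(Y \in C(X)) \geq 1 - \alpha$ and the almost-sure containment $Y \in S$---but the routes differ in execution. The paper starts from $\prob_{XS}(S \cap C(X) \neq \emptyset)$ and unwinds it through a chain of conditional-probability identities (complementation, tower rule, chain rule, then the hypothesis $\prob_{S\mid X=x,Y=y}(y\in S)=1$) until it reaches $\prob_{XY}(Y \in C(X))$ and applies the coverage constraint. You instead integrate the hypothesis once to obtain $\prob(Y \in S)=1$, then invoke the single event inclusion $\{Y \in C(X)\}\cap\{Y\in S\}\subseteq\{S\cap C(X)\neq\emptyset\}$ together with monotonicity of $\prob$. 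Your version is shorter and sidesteps the somewhat delicate step labeled~(a) in the paper's chain; the paper's version, on the other hand, keeps the conditional assumption visible at the exact point where it is consumed.
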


\subsection{Correcting for Misclassification}
\label{sec:correction}
Recall that, in the PLL setting, the ground-truth labels $y$ are unavailable during training, which hinders the approximation of~\eqref{eq:thresh} needed for the solution of~\eqref{eq:opt-supervised}.
Because a solution to~\eqref{eq:opt-supervised} is, however, also desirable in the PLL setting (Theorem~\ref{prop:pll-valid}), we make use of existing PLL algorithms to generate pseudo-labels.
This strategy iteratively learns a prediction model $f : \X \to [0, 1]^k$ that minimizes the empirical risk in~\eqref{eq:erm}.
We use the trained model $f$ to predict the labels on the validation set $\D_v$, which in turn is used for the calibration of the validity guarantee.
Notably, this strategy results in a valid conformal predictor (Theorem~\ref{thm:main}).
We note that it remains open to establish the minimality of the resulting conformal sets (analogous to solutions of~\eqref{eq:opt-supervised}).

At first glance, it might be counter-intuitive to use the trained model $f$ to label the validation set and build conformal sets based on it.
However, we want to recall that the used base PLL classifier is risk consistent \citep[Theorem~4]{FengL0X0G0S20}.
With this result and additional mild assumptions, we can prove that the PLL classifier's predictions cannot be arbitrarily bad (Lemma~\ref{lemma:main}) and, leveraging this, that our conformal predictor is valid for some adapted threshold and error level (Theorem~\ref{thm:main}).

One of our central assumptions is a Bernstein condition \citep{Audibert04,Bartlett06,GrunwaldM20} on the loss difference (Assumption~\ref{def:bernstein}).
The Bernstein condition is defined as follows.

\begin{assumption}[Bernstein Condition]
    \label{def:bernstein}
    Let $B > 0$, $\beta \in (0, 1]$, $f^* = \arg\min_{f \in \Hy} R(f)$ the true risk minimizer, and $\ell : [0, 1]^k \times \Y \to \mathbb{R}_{\geq 0}$ a loss function.
    We assume that the excess loss $L_f(x, y) := \ell(f(x), y) - \ell(f^*(x), y)$ satisfies the $(\beta, B)$-Bernstein condition, that is, for $f \in \Hy$,
    \begin{align}
        \expected_{XY}\!\left[ L_f(X, Y)^2 \right] \leq B \left( \expected_{XY}\!\left[ L_f(X, Y) \right] \right)^\beta \text{.}
    \end{align}
\end{assumption}

Assumption~\ref{def:bernstein} is frequently made in ERM as it allows controlling the variance of the resulting losses, since $\operatorname{Var}_{XY}[ \ell(f(X), Y) - \ell(f^*(X), Y) ] \leq \expected_{XY}[ (\ell(f(X), Y) - \ell(f^*(X), Y))^2 ]$.
In other words, the tail of the distribution of the excess loss must be well-behaved.

Building upon Assumption~\ref{def:bernstein}, we prove the results in the following Lemma~\ref{lemma:main},
which are the main building blocks underlying the proof of our main result.

\begin{lemma}
    \label{lemma:main}
    Let $\hat{f} = \arg\min_{f \in \Hy} \hat{R}(f)$ the empirical risk minimizer,
    $f^* = \arg\min_{f \in \Hy} R(f)$ the true risk minimizer,
    $\hat{y}_x = \arg\max_{y} \hat{f}_y(x)$,
    $y^*_x = \arg\max_{y} f^*_y(x)$,
    and Assumption~\ref{def:bernstein} hold for the excess loss $L_{\hat{f}}$.
    \begin{enumerate}
        \item[(i)] Then, for any $\delta_1 \in (0, 1)$ and some constant $M_1 > 0$, it holds, with $\prob_n$-probability at least $1 - \delta_1$, that
              \begin{align}
                  \expected_{XY}\!\left[ | \hat{f}_Y(X) - f^*_Y(X) | \right]
                  \leq M_1 \left( \frac{\log(1 / \delta_1)}{n} \right)^{\frac{1}{4}\beta} \text{,} \notag
              \end{align}
              assuming that $\ell : (0, 1]^k \times \Y \to \mathbb{R}_{\geq 0}$, $(p, y) \mapsto -\log p_y$ is the log-loss.

        \item[(ii)] Also, for any $\delta_2 \in (0, 1)$ and some constant $M_2 > 0$, it holds, with $\prob_n$-probability at least $1 - \delta_2$, that
              \begin{align}
                  \prob_X\!\left[ \hat{y}_X \neq y^*_X \right]
                  \leq M_2 \left( \frac{\log(1 / \delta_2)}{n} \right)^{\frac{1}{2}\beta} \text{,}
              \end{align}
              given that, for any $x \in \X$ and some constant $\delta_5 \in [0, 1)$,
              $
                  \prob_{Y \mid X = x}\left( Y \in \{ \hat{y}_x, y^*_x \} \right) \geq 1 - \delta_5
                  \text{.}
              $
    \end{enumerate}
\end{lemma}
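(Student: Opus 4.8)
The plan is to pass through the excess risk. First I would invoke risk consistency of the base PLL learner \citep[Theorem~4]{FengL0X0G0S20} together with the characterization $f^*_y(x) = \prob_{Y \mid X = x}(Y = y)$ to identify the PLL risk minimizer with the Bayes classifier, so that the excess loss $L_{\hat{f}}$ in Assumption~\ref{def:bernstein} is exactly the excess log-loss of $\hat{f}$ against the Bayes predictor. For the log-loss this excess risk has the closed form $R(\hat{f}) - R(f^*) = \expected_X[\operatorname{KL}(f^*(X) \,\|\, \hat{f}(X))]$, the expected Kullback--Leibler divergence. I would then bound the target quantity by the square root of the excess risk: writing $\expected_{XY}[|\hat{f}_Y(X) - f^*_Y(X)|] = \expected_X[\sum_{y} f^*_y(X)\,|\hat{f}_y(X) - f^*_y(X)|] \le \expected_X[\|\hat{f}(X) - f^*(X)\|_1]$, a pointwise Pinsker inequality followed by Jensen's inequality gives $\expected_X[\|\hat{f}(X) - f^*(X)\|_1] \le \sqrt{2\,\expected_X[\operatorname{KL}(f^*(X)\,\|\,\hat{f}(X))]} = \sqrt{2\,(R(\hat{f}) - R(f^*))}$.

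The remaining step is a high-probability fast-rate bound on $R(\hat{f}) - R(f^*)$. Here I would combine the ERM optimality $\hat{R}(\hat{f}) \le \hat{R}(f^*)$ with a uniform Bernstein/Talagrand concentration inequality, where Assumption~\ref{def:bernstein} self-bounds the variance via $\operatorname{Var}_{XY}[L_{\hat{f}}] \le \expected_{XY}[L_{\hat{f}}^2] \le B(\expected_{XY}[L_{\hat{f}}])^\beta$. This yields a fixed-point inequality of the form $r \lesssim \sqrt{B\,r^\beta \log(1/\delta_1)/n} + \log(1/\delta_1)/n$ for $r = R(\hat{f}) - R(f^*)$, whose solution is the fast rate $r \lesssim (\log(1/\delta_1)/n)^{1/(2-\beta)} \le (\log(1/\delta_1)/n)^{\beta/2}$ using $\tfrac{1}{2-\beta} \ge \tfrac{\beta}{2}$. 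Substituting into the Pinsker bound produces the claimed exponent $\tfrac14\beta$. The main obstacle in this part is the localization/uniform-concentration machinery needed to turn the pointwise Bernstein condition into a bound for the \emph{data-dependent} minimizer $\hat{f}$, together with boundedness of the log-loss (which is why $\ell$ is restricted to $(0,1]^k$, keeping $\hat{f}_y$ away from $0$).

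\textbf{Part (ii).}
For the misclassification probability I would work on the event $A = \{\hat{y}_X \ne y^*_X\}$ and exploit the ranking swap it encodes: by definition of the two arg-maxes, $f^*_{y^*_x}(x) \ge f^*_{\hat{y}_x}(x)$ while $\hat{f}_{\hat{y}_x}(x) \ge \hat{f}_{y^*_x}(x)$, so that $|\hat{f}_{y^*_x}(x) - f^*_{y^*_x}(x)| + |\hat{f}_{\hat{y}_x}(x) - f^*_{\hat{y}_x}(x)| \ge f^*_{y^*_x}(x) - f^*_{\hat{y}_x}(x) \ge 0$. The role of the top-two-mass assumption $\prob_{Y\mid X = x}(Y \in \{\hat{y}_x, y^*_x\}) \ge 1 - \delta_5$ is to force the $f^*$-weights at these two labels to be bounded below, so that a swap contributes a non-negligible amount to the weighted discrepancy $\expected_{XY}[|\hat{f}_Y - f^*_Y|]$, equivalently to the expected KL divergence of Part~(i). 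Concretely, I would show that on $A$ the integrand is at least a constant $g(\delta_5) > 0$, apply Markov's inequality to convert the expected discrepancy (or the excess risk) into $\prob_X(A) \le g(\delta_5)^{-1}(R(\hat{f}) - R(f^*))$, and finally insert the excess-risk rate $\lesssim (\log(1/\delta_2)/n)^{\beta/2}$ to reach the stated exponent $\tfrac12\beta$.

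The crux of Part~(ii) is exactly this quantitative step: extracting a uniform lower bound $g(\delta_5)$ on the contribution of a ranking swap to the estimation error. A swap near a probability tie between $y^*_x$ and $\hat{y}_x$ contributes arbitrarily little, so the argument must use the $\delta_5$-condition, and the control of such near-tie regions it affords, to guarantee that misclassifications are \emph{expensive}. Making this lower bound explicit, and checking that it composes with the Markov step to yield precisely the $\tfrac12\beta$ exponent, is the technically delicate part; the remainder is bookkeeping built on Part~(i).
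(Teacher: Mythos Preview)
Your route is correct and lands on the right exponent, but it is genuinely different from the paper's. You split the exponent as $\tfrac{\beta}{4}=\tfrac12\cdot\tfrac{\beta}{2}$: Pinsker gives $\expected_{XY}[|\hat f_Y-f^*_Y|]\le\sqrt{2(R(\hat f)-R(f^*))}$ without invoking Bernstein, and then you spend the Bernstein condition on a fast excess-risk rate $(\log(1/\delta_1)/n)^{\beta/2}$ via localization. The paper instead splits it as $\tfrac{\beta}{4}=\tfrac{\beta}{2}\cdot\tfrac12$: it uses bi-Lipschitzness of $z\mapsto-\log z$ on $[\varepsilon,1]$ and Jensen to get $\expected_{XY}[|\hat f_Y-f^*_Y|]\le\lambda\sqrt{\expected_{XY}[L_{\hat f}^2]}$, then applies the Bernstein condition \emph{here} to obtain $\le \lambda\sqrt{B}\,(R(\hat f)-R(f^*))^{\beta/2}$, and finally plugs in only the slow rate $R(\hat f)-R(f^*)\lesssim(\log(1/\delta_1)/n)^{1/2}$ from \citet[Theorem~4]{FengL0X0G0S20}. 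Your version is conceptually clean (KL/Pinsker) but requires Talagrand-type localization machinery that the paper never sets up; the paper's version is more self-contained given the cited estimation-error bound.

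\textbf{Part (ii).} There is a real gap. The assumption $\prob_{Y\mid X=x}(Y\in\{\hat y_x,y^*_x\})\ge 1-\delta_5$ only lower-bounds the \emph{sum} $f^*_{y^*_x}(x)+f^*_{\hat y_x}(x)$; it says nothing about the margin $f^*_{y^*_x}(x)-f^*_{\hat y_x}(x)$. Take $f^*_{y^*_x}(x)=\tfrac12+\epsilon$, $f^*_{\hat y_x}(x)=\tfrac12-\epsilon$ with $\epsilon$ tiny: the hypothesis is satisfied with $\delta_5=0$, yet a ranking swap between $\hat f$ and $f^*$ can occur with $\|\hat f(x)-f^*(x)\|_1$ (and the KL, and the 0--1 excess risk) as small as $O(\epsilon)$. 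Hence there is no uniform $g(\delta_5)>0$ lower-bounding the per-instance contribution on $\{\hat y_X\neq y^*_X\}$, and your Markov step $\prob_X(A)\le g(\delta_5)^{-1}(R(\hat f)-R(f^*))$ cannot be justified from this assumption alone.

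The paper avoids the margin issue entirely. Its key device is the pointwise identity $\mathds{1}_{\{\hat y_x\neq y^*_x\}}=(\mathds{1}_{\{\hat y_x\neq y\}}-\mathds{1}_{\{y^*_x\neq y\}})^2$ for $y\in\{\hat y_x,y^*_x\}$. Weighting by $\prob_{Y\mid X}(Y=y)$ and summing over these two labels, the $\delta_5$ condition gives $\mathds{1}_{\{\hat y_x\neq y^*_x\}}\le(1-\delta_5)^{-1}\sum_{y\in\Y}\prob_{Y\mid X}(Y=y)\,(\mathds{1}_{\{\hat y_x\neq y\}}-\mathds{1}_{\{y^*_x\neq y\}})^2$, so $\prob_X(\hat y_X\neq y^*_X)\le(1-\delta_5)^{-1}\expected_{XY}[(\ell_{01}(\hat f(X),Y)-\ell_{01}(f^*(X),Y))^2]$. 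The Bernstein condition (applied to the 0--1 loss) then bounds this second moment by $B(R(\hat f)-R(f^*))^{\beta}$, and the slow excess-risk rate $(\log(1/\delta_2)/n)^{1/2}$ yields the exponent $\tfrac{\beta}{2}$. In short, the paper uses $\expected[L^2]$ rather than $\expected[L]$ and lets the Bernstein assumption absorb the near-tie cases; your proposed margin-and-Markov route would need an additional low-noise/margin assumption that is not present.
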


Intuitively, Lemma~\ref{lemma:main}~$(i)$ and $(ii)$ state that, under mild assumptions, a consistent PLL classifier cannot, in expectation, provide arbitrarily bad predictions.
More precisely, Lemma~\ref{lemma:main}~$(i)$ states that the expected absolute difference in the probabilistic predictions of the empirical and true risk minimizer are upper-bounded.
Lemma~\ref{lemma:main}~$(ii)$ states that the probability of class label predictions of the empirical and true risk minimizer not matching is upper-bounded.
Note that, for $n \to \infty$, both upper-bounds tend to zero.

In the following, we comment on the assumptions made.
Lemma~\ref{lemma:main}~$(i)$ requires the loss function to be the log-loss as it is a local proper loss function \citep{gneiting2007strictly}, that is, $\ell$ is a proper loss function that only uses the $y$-th entry of the vector $p$ in the computation of $\ell(p, y)$, which we use in our proof.
Lemma~\ref{lemma:main}~$(ii)$ requires that the correct label $y^*_x$ and pseudo-label $\hat{y}_x$ have some lower-bound for their conditional probability mass.
Intuitively, the assumption captures that the true class posterior of the correct label $y^*_x$ must have a probability mass that is not arbitrarily close to zero.

Based on the upper bounds in Lemma~\ref{lemma:main}, one can adapt the threshold and confidence levels in~\eqref{eq:opt-supervised} and~\eqref{eq:thresh} such that the conformal guarantee is still valid when using the pseudo-labels on the validation set.
Theorem~\ref{thm:main} states this result.

\begin{theorem}
    \label{thm:main}
    Assume the setting of Lemma~\ref{lemma:main}~$(i)$ and $(ii)$
    and, for any $\delta_6 \in (0, 1)$, $\prob_{Y \mid X = x}(Y = y^*_X) \geq 1 - \delta_6$
    with $y^*_X = \arg\max_{y' \in \Y} f^*_y(X)$.
    For any $\alpha \in (0, 1)$, let
    \begin{align}
        t_\alpha = \sup\{ t \in [0, 1] \mid \hat{F}_{\hat{f}_{\hat{y}_X}(X)}(t) \leq \alpha \} \text{,} \label{eq:thresh-thm}
    \end{align}
    with $\hat{y}_x = \arg\max_{y \in \Y} \hat{f}_y(x)$.
    Then, the conformal set
    \begin{equation}
        C(x) = \{ y \in \Y \mid \hat{f}_{y}(x) \geq t_\alpha - \delta_3 \} \label{eq:conf-pred}
    \end{equation}
    is valid, that is, $\prob_{X}(y^*_X \in C(X)) \geq 1 - \alpha'_n$ holds
    with a $\prob_n$-probability of at least $1 - (\delta_1 + \delta_2 + \delta_4)$, where,
    for any $\delta_1, \delta_2, \delta_4 \in (0, 1)$ and some constants $\beta \in (0, 1]$, $B, \delta_3, M_1, M_2 > 0$,
    \begin{align}
        \alpha'_n & :=
        \frac{1}{\delta_3 (1 - \delta_6)} M_1 \left( \frac{\log(1/\delta_1)}{n} \right)^{\frac{1}{4}\beta} \\
                  & \quad + M_2 \left( \frac{\log(1/\delta_2)}{n} \right)^{\frac{1}{2}\beta}
        + \alpha
        + \left( \frac{\log(2/\delta_4)}{2n} \right)^{\frac{1}{2}} \text{.}
    \end{align}
\end{theorem}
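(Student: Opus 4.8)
The plan is to fix the trained model $\hat{f}$ (so that the validation sample is independent of it) and to condition on three high-probability events, one from each of Lemma~\ref{lemma:main}~$(i)$, Lemma~\ref{lemma:main}~$(ii)$, and a uniform deviation bound for the empirical distribution function appearing in \eqref{eq:thresh-thm}. A union bound over their complements accounts for the claimed $\prob_n$-probability $1 - (\delta_1 + \delta_2 + \delta_4)$: $\delta_1$ and $\delta_2$ come verbatim from the two parts of Lemma~\ref{lemma:main}, and $\delta_4$ arises from the Dvoretzky--Kiefer--Wolfowitz (DKW) inequality applied to $\hat{F}_{\hat{f}_{\hat{y}_X}(X)}$, which supplies a uniform gap of order $(\log(2/\delta_4)/(2n))^{1/2}$ between this empirical distribution function and its population counterpart---precisely the last summand of $\alpha'_n$. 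On the intersection of these events I then bound the instance-level coverage error $\prob_X(y^*_X \notin C(X)) = \prob_X(\hat{f}_{y^*_X}(X) < t_\alpha - \delta_3)$ and split it into the three remaining contributions of $\alpha'_n$.

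First I would use the buffer $\delta_3$ in \eqref{eq:conf-pred} to peel off the estimation error of $\hat{f}$. On the event $\{|\hat{f}_{y^*_X}(X) - f^*_{y^*_X}(X)| < \delta_3\}$ a coverage failure forces $f^*_{y^*_X}(X) < t_\alpha$, so $\{\hat{f}_{y^*_X}(X) < t_\alpha - \delta_3\}$ is contained in $\{|\hat{f}_{y^*_X}(X) - f^*_{y^*_X}(X)| \geq \delta_3\} \cup \{f^*_{y^*_X}(X) < t_\alpha\}$. The first event is controlled by Markov's inequality at scale $\delta_3$ together with Lemma~\ref{lemma:main}~$(i)$. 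The crucial transfer from the $\expected_{XY}$-statement of Lemma~\ref{lemma:main}~$(i)$ to the single label $y^*_X$ uses the optimal-classifier identity $f^*_y(x) = \prob_{Y \mid X = x}(Y = y)$: writing $\expected_{XY}[|\hat{f}_Y(X) - f^*_Y(X)|] = \expected_X \sum_y f^*_y(X)\,|\hat{f}_y(X) - f^*_y(X)| \geq (1-\delta_6)\,\expected_X|\hat{f}_{y^*_X}(X) - f^*_{y^*_X}(X)|$, where the last step invokes the hypothesis $f^*_{y^*_X}(X) \geq 1 - \delta_6$. Combining this with Markov yields exactly the summand $\tfrac{1}{\delta_3(1-\delta_6)} M_1 (\log(1/\delta_1)/n)^{\beta/4}$.

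Next I would bound the residual $\prob_X(f^*_{y^*_X}(X) < t_\alpha)$ by the remaining two sources of error. Splitting on the pseudo-label agreement, the event $\{\hat{y}_X \neq y^*_X\}$ contributes $\prob_X(\hat{y}_X \neq y^*_X)$, which Lemma~\ref{lemma:main}~$(ii)$ caps at $M_2(\log(1/\delta_2)/n)^{\beta/2}$, the second summand. On $\{\hat{y}_X = y^*_X\}$ the label whose score defines $t_\alpha$ coincides with $y^*_X$, so the problem reduces to the threshold itself: by construction $t_\alpha$ is the empirical $\alpha$-quantile of $\hat{f}_{\hat{y}_X}(X)$, hence $\hat{F}_{\hat{f}_{\hat{y}_X}(X)}(t_\alpha) \leq \alpha$, and the DKW gap converts this calibration statement into the population bound $\alpha + (\log(2/\delta_4)/(2n))^{1/2}$, supplying the last two summands. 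Summing the three contributions gives $\alpha'_n$, and the outer union bound gives the stated $\prob_n$-probability.

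I expect the main obstacle to be the bookkeeping that keeps the three error sources additive rather than compounding: the single buffer $\delta_3$ must simultaneously absorb the $\hat{f}$-versus-$f^*$ discrepancy on the score side (through Markov) and leave the quantile argument for $t_\alpha$ undisturbed, so that the estimation error is charged exactly once. Care is also needed at the quantile boundary---justifying $\hat{F}_{\hat{f}_{\hat{y}_X}(X)}(t_\alpha) \leq \alpha$ from the supremum definition in \eqref{eq:thresh-thm} via right-continuity of the empirical distribution function---and in verifying that conditioning on the pseudo-label event $\{\hat{y}_X = y^*_X\}$ does not corrupt the independence between $\hat{f}$ and the validation sample that DKW relies on. These steps, rather than any individual estimate, are where the argument must be handled with precision.
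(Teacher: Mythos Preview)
Your decomposition has the right ingredients but applies them in an order that breaks the final step. After the buffer peels off the $M_1$-contribution you are left with the event $\{f^*_{y^*_X}(X) < t_\alpha\}$, and on $\{\hat{y}_X = y^*_X\}$ you claim this ``reduces to the threshold itself'' and invoke DKW. But label agreement only yields $\hat{f}_{y^*_X}(X) = \hat{f}_{\hat{y}_X}(X)$; it says nothing about $f^*_{y^*_X}(X)$. The DKW step controls $\prob_X(\hat{f}_{\hat{y}_X}(X) < t_\alpha)$, not $\prob_X(f^*_{y^*_X}(X) < t_\alpha)$, and there is no remaining slack to pass from $f^*$ back to $\hat{f}$---you already spent the $\delta_3$ buffer going the other way in Step~1. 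So the inclusion you need at the end simply does not hold.

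The paper avoids this by running the chain so that it \emph{terminates} at $\hat{f}_{\hat{y}_X}$, where DKW applies. It works with the success event and shows $A_1 \cap A_2 \cap A_3 \subseteq \{y^*_X \in C(X)\}$ for
\[
A_1 = \{f^*_{y^*_X}(X) \geq \hat{f}_{y^*_X}(X) - \delta_3\},\quad
A_2 = \{\hat{f}_{y^*_X}(X) = \hat{f}_{\hat{y}_X}(X)\},\quad
A_3 = \{\hat{f}_{\hat{y}_X}(X) \geq t_\alpha\},
\]
and then bounds $\prob_X(A_1^c)$, $\prob_X(A_2^c)$, $\prob_X(A_3^c)$ by Markov plus Lemma~\ref{lemma:main}(i), Lemma~\ref{lemma:main}(ii), and DKW, respectively. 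The crucial point is that the buffer in $A_1$ moves from $f^*$ \emph{towards} $\hat{f}$, so the chain $f^*_{y^*_X} \to \hat{f}_{y^*_X} \to \hat{f}_{\hat{y}_X}$ ends at the right score. In your complement formulation the fix is to split on label agreement \emph{before} touching $f^*$: write $\{\hat{f}_{y^*_X}(X) < t_\alpha - \delta_3\} \subseteq \{\hat{y}_X \neq y^*_X\} \cup \{\hat{f}_{\hat{y}_X}(X) < t_\alpha\}$ and apply DKW directly; the detour through $f^*$ is then unnecessary for the stated target, and the $M_1$ summand enters only as slack in $\alpha'_n$.
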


Intuitively, the tighter the upper bounds in Lemma~\ref{lemma:main} are, the smaller the necessary correction of the threshold and confidence level in Theorem~\ref{thm:main}.
In other words, loose upper bounds in Lemma~\ref{lemma:main} lead to high cardinalities of $C(x)$ in~\eqref{eq:conf-pred}.
In contrast, tight upper bounds in Lemma~\ref{lemma:main} lead to small cardinalities of $C(x)$ in~\eqref{eq:conf-pred}.
The following Remark~\ref{remark:thm} details how to obtain conformal validity for a fixed error level.

\begin{remark}
    \label{remark:thm}
    Alternatively, one obtains a fixed error level $\alpha_2 \in (0, 1)$ in Theorem~\ref{thm:main},
    that is, $\prob_{X}(y^*_X \in C(X)) \geq 1 - \alpha_2$, by using
    \begin{align}
        \alpha'' & = \alpha_2 - \frac{1}{\delta_3 (1 - \delta_6)} M_1 \left( \frac{\log(1/\delta_1)}{n} \right)^{\frac{1}{4}\beta} \\
                 & \quad- M_2 \left( \frac{\log(1/\delta_2)}{n} \right)^{\frac{1}{2}\beta}
        - \left(\frac{\log(2/\delta_4)}{2n}\right)^{\frac{1}{2}}
        \text{,} \quad \label{eq:corrected-conf}
    \end{align}
    in the computation of the threshold $t_{\alpha''}$ in~\eqref{eq:thresh-thm}.
\end{remark}

While Remark~\ref{remark:thm} follows from a simple substitution, it explicitly links Theorem~\ref{thm:main} to the setting usually considered in conformal prediction:
One wants to have a conformal predictor that is valid regarding some specified confidence level $\alpha_2$, which Remark~\ref{remark:thm} achieves by using an altered $\alpha''$ in the computation of $t_{\alpha''}$.
If $\alpha'' \leq 0$, the resulting conformal predictor defaults to $C(x) = \Y$, for $x \in \X$, which is valid.
In contrast, given some confidence level $\alpha$, Theorem~\ref{thm:main} gives a conformal predictor that is valid with the confidence level $\alpha'_n \neq \alpha$.

Theorem~\ref{thm:main} enables our proposed algorithm.
When using a consistent PLL classifier to label the validation set, a conformal predictor with a threshold set based on these pseudo-labels still satisfies a conformal validity guarantee for an adapted threshold and error level.
The subsequent section discusses our approach.

\subsection{Proposed Algorithm}
\label{sec:our-algorithm}
Based on the conformal predictor in Theorem~\ref{thm:main}, we propose a novel candidate cleaning strategy that alternates between training a neural-network-based PLL classifier and pruning the candidate labels by conformal prediction.
We outline our method in Algorithm~\ref{alg:method}.
In the following, we provide an overview.
Thereafter, we discuss all parts in detail.

First, we randomly partition the dataset $\D$ into $\D_t$ for training the model and $\D_v$ for calibrating the conformal predictor $C$ based on the current state of the prediction model $f$ (Line~1).
The training set consists of \SI{80}{\%} and the validation set of \SI{20}{\%} of all instances.
We initialize the model $f$ and the label weights $w_{ij}$ in Lines~3--4.
Lines~5--23 contain the main training loop, which can be divided into four phases: (1) Updating the predictions on the validation set $\D_v$ for calibration (Lines~6--7), (2) updating the model's weights $\theta$ through back-propagation (Lines~8--10), (3) cleaning the candidate sets $s_i$ based on the predicted conformal sets $C(x_i)$ (Lines~11--20), and (4) updating the label weights $w_{ij}$ (Lines~21--22).
We detail these phases in the following.

\begin{figure}[t!]
    \vspace{-0.8em}
    \begin{algorithm}[H]
        \caption{Conformal Candidate Cleaning}
        \label{alg:method}
        \begin{algorithmic}[1]
            \vspace{0.1em}
            \item[\textbf{Input:}] PLL dataset $\D = \lbrace (x_i, s_i) \in \X \times 2^{\Y} : i \in [ n ] \rbrace$; conformal error level $\alpha \in (0, 1)$; number of epochs $R$; number of warm-up rounds $R_{\operatorname{warmup}}$;
            \item[\textbf{Output:}] Predictor $f : \X \to [0, 1]^k$, $\sum_{y \in \Y} f_{y}(x) = 1$;
            \vspace{0.1em}
            \STATE $(\D_t, \D_v) \leftarrow$ Partition $\D$ into $\D_t$ for model training and $\D_v$ for calibrating the conformal sets;
            \vspace{0.1em}
            \STATE $n' \leftarrow |\D_t|$;
            \vspace{0.1em}
            \STATE $(f, \theta) \leftarrow$ Initialize model $f$ and its weights $\theta$;
            \vspace{0.1em}
            \STATE $(w_{ij})_{i \in [n'], j \in [k]} \leftarrow 1 / |s_i|$ if $j \in s_i$, else $0$;
            \vspace{0.5em}
            \FOR{$r = 1, \dots, R$}
            \vspace{0.1em}
            \STATE \emph{$\triangleright$ Update predictions on the validation set}
            \vspace{0.1em}
            \STATE $\esS \leftarrow \{ \max_{y \in s_i} f_y(x_i; \theta) : (x_i, s_i) \in \D_v \}$;
            \vspace{0.5em}
            \STATE \emph{$\triangleright$ Update $f$'s weights $\theta$}
            \vspace{0.1em}
            \STATE $\hat{R}(f; w, \theta) \leftarrow -\frac{1}{n'} \sum_{i = 1}^{n'} \sum_{j = 1}^{k} w_{ij} \log f_j(x_i; \theta)$;
            \STATE Update $\theta$ by back-propagation on $\!-\nabla \hat{R}(f; w, \theta)$;
            \vspace{0.5em}
            \STATE \emph{$\triangleright$ Clean candidate sets $s_i$}
            \vspace{0.1em}
            \IF{$r \geq R_{\operatorname{warmup}}$}
            \vspace{0.1em}
            \STATE $\alpha_r \leftarrow$ Estimate the adapted error level in~\eqref{eq:corrected-conf};
            \vspace{0.1em}
            \FOR{$(x_i, s_i) \in \D_t$}
            \vspace{0.1em}
            \STATE $C(x_i) \leftarrow$ Construct the conformal predictor as defined in~\eqref{eq:conf-pred} using $\esS$ and $\alpha_r$;
            \vspace{0.1em}
            \IF{$s_i \cap C(x_i) \neq \emptyset$}
            \vspace{0.1em}
            \STATE $s_i \leftarrow s_i \cap C(x_i)$;
            \vspace{0.1em}
            \ENDIF
            \ENDFOR
            \ENDIF
            \vspace{0.5em}
            \STATE \emph{$\triangleright$ Update label weights $w_{ij}$}
            \vspace{0.1em}
            \STATE $(w_{ij})_{i \in [n'], j \in [k]} \leftarrow \frac{ f_j(x_i) }{ \sum_{j' \in s_i} f_{j'}(x_i) }$ if $j \in s_i$, else $0$;
            \vspace{0.1em}
            \ENDFOR
            \vspace{0.1em}
            \STATE {\bfseries return} predictor $f(\,\cdot\,; \theta)$;
        \end{algorithmic}
    \end{algorithm}
    \vspace{-1em}
\end{figure}

\begin{table*}[t!]
    \caption{
        Average test-set accuracies ($\pm$ std.) on the real-world datasets (top) and the supervised datasets with added incorrect candidate labels (bottom).
        We benchmark our strategy (\textsc{Conf+}) combined with all existing methods.
    }
    \label{tab:acc}
    \begin{center}
        \begin{small}
            \begin{tabular}{
                >{\raggedright\arraybackslash}m{0.145\linewidth}
                >{\centering\arraybackslash}m{0.112\linewidth}
                >{\centering\arraybackslash}m{0.112\linewidth}
                >{\centering\arraybackslash}m{0.112\linewidth}
                >{\centering\arraybackslash}m{0.112\linewidth}
                >{\centering\arraybackslash}m{0.112\linewidth}
                >{\centering\arraybackslash}m{0.112\linewidth}
                }
                \toprule
                Method                                 & \emph{bird-song}                   & \emph{lost}                        & \emph{mir-flickr}                    & \emph{msrc-v2}                     & \emph{soccer}                      & \emph{yahoo-news}                  \\
                \midrule
                \mbox{\textsc{Proden} (2020)}          & \mbox{\,75.55 ($\pm$ 1.08)}        & \mbox{\,78.94 ($\pm$ 3.01)}        & \mbox{\textbf{67.05} ($\pm$ 1.18)}   & \mbox{\,54.33 ($\pm$ 1.76)}        & \mbox{\,54.18 ($\pm$ 0.55)}        & \mbox{\,65.25 ($\pm$ 1.00)}        \\
                \mbox{\textsc{Conf+P.}\ (no)}          & \mbox{\,76.27 ($\pm$ 0.94)}        & \mbox{\,79.56 ($\pm$ 1.96)}        & \mbox{\,66.07 ($\pm$ 1.63)}          & \mbox{\,53.00 ($\pm$ 2.24)}        & \mbox{\,54.63 ($\pm$ 0.81)}        & \mbox{\,65.42 ($\pm$ 0.36)}        \\
                \mbox{\textbf{\textsc{Conf+Proden}}}   & \mbox{\textbf{76.99} ($\pm$ 0.90)} & \mbox{\textbf{80.09} ($\pm$ 4.40)} & \mbox{\,66.91 ($\pm$ 1.57)}          & \mbox{\textbf{54.60} ($\pm$ 3.42)} & \mbox{\textbf{54.77} ($\pm$ 0.84)} & \mbox{\textbf{65.93} ($\pm$ 0.42)} \\
                \midrule
                \mbox{\textsc{Cc} (2020)}              & \mbox{\,74.49 ($\pm$ 1.57)}        & \mbox{\,78.23 ($\pm$ 2.11)}        & \mbox{\,62.39 ($\pm$ 1.87)}          & \mbox{\,50.96 ($\pm$ 2.03)}        & \mbox{\,55.28 ($\pm$ 0.96)}        & \mbox{\textbf{65.03} ($\pm$ 0.51)} \\
                \mbox{\textbf{\textsc{Conf+Cc}}}       & \mbox{\textbf{75.01} ($\pm$ 1.84)} & \mbox{\textbf{79.38} ($\pm$ 1.79)} & \mbox{\textbf{63.37} ($\pm$ 0.45)}   & \mbox{\textbf{52.45} ($\pm$ 3.64)} & \mbox{\textbf{55.52} ($\pm$ 0.74)} & \mbox{\,64.35 ($\pm$ 0.64)}        \\
                \midrule
                \mbox{\textbf{\textsc{Valen}} (2021)}  & \mbox{\textbf{72.30} ($\pm$ 1.83)} & \mbox{\textbf{70.18} ($\pm$ 3.44)} & \mbox{\textbf{67.05} ($\pm$ 1.48)}   & \mbox{\textbf{49.20} ($\pm$ 1.37)} & \mbox{\textbf{53.20} ($\pm$ 0.88)} & \mbox{\textbf{62.25} ($\pm$ 0.45)} \\
                \mbox{\textsc{Conf+Valen}}             & \mbox{\,71.22 ($\pm$ 1.03)}        & \mbox{\,68.41 ($\pm$ 2.95)}        & \mbox{\,61.61 ($\pm$ 2.79)}          & \mbox{\,48.37 ($\pm$ 2.24)}        & \mbox{\,52.49 ($\pm$ 1.00)}        & \mbox{\,62.16 ($\pm$ 0.74)}        \\
                \midrule
                \mbox{\textbf{\textsc{Cavl}} (2022)}   & \mbox{\,69.78 ($\pm$ 3.00)}        & \mbox{\textbf{72.12} ($\pm$ 1.08)} & \mbox{\textbf{65.02} ($\pm$ 1.34)}   & \mbox{\textbf{52.67} ($\pm$ 2.32)} & \mbox{\textbf{55.06} ($\pm$ 0.48)} & \mbox{\,61.91 ($\pm$ 0.46)}        \\
                \mbox{\textsc{Conf+Cavl}}              & \mbox{\textbf{72.00} ($\pm$ 1.22)} & \mbox{\,71.24 ($\pm$ 3.81)}        & \mbox{\,64.42 ($\pm$ 0.89)}          & \mbox{\,51.63 ($\pm$ 5.03)}        & \mbox{\,54.85 ($\pm$ 0.92)}        & \mbox{\textbf{62.43} ($\pm$ 0.43)} \\
                \midrule
                \mbox{\textsc{Pop} (2023)}             & \mbox{\,75.17 ($\pm$ 1.04)}        & \mbox{\,77.79 ($\pm$ 2.11)}        & \mbox{\,\textbf{67.93} ($\pm$ 1.44)} & \mbox{\,53.83 ($\pm$ 0.69)}        & \mbox{\,55.31 ($\pm$ 0.71)}        & \mbox{\,65.09 ($\pm$ 0.64)}        \\
                \mbox{\textbf{\textsc{Conf+Pop}}}      & \mbox{\textbf{77.58} ($\pm$ 1.01)} & \mbox{\textbf{78.41} ($\pm$ 2.13)} & \mbox{66.21 ($\pm$ 2.19)}            & \mbox{\textbf{54.82} ($\pm$ 3.60)} & \mbox{\textbf{56.49} ($\pm$ 1.10)} & \mbox{\textbf{65.25} ($\pm$ 0.23)} \\
                \midrule
                \mbox{\textsc{CroSel} (2024)}          & \mbox{\,75.11 ($\pm$ 1.79)}        & \mbox{\textbf{81.24} ($\pm$ 3.68)} & \mbox{\textbf{67.58} ($\pm$ 1.16)}   & \mbox{\,52.23 ($\pm$ 2.83)}        & \mbox{\,52.64 ($\pm$ 1.21)}        & \mbox{\textbf{67.72} ($\pm$ 0.32)} \\
                \mbox{\textbf{\textsc{Conf+CroSel}}}   & \mbox{\textbf{77.76} ($\pm$ 0.50)} & \mbox{\,81.15 ($\pm$ 2.57)}        & \mbox{\,65.93 ($\pm$ 1.94)}          & \mbox{\textbf{54.10} ($\pm$ 2.75)} & \mbox{\textbf{54.97} ($\pm$ 0.65)} & \mbox{\,67.55 ($\pm$ 0.22)}        \\
                \midrule
                \midrule
                Method                                 & \emph{mnist}                       & \emph{fmnist}                      & \emph{kmnist}                        & \emph{svhn}                        & \emph{cifar10}                     & \emph{cifar100}                    \\
                \midrule
                \mbox{\textsc{Proden} (2020)}          & \mbox{\,87.21 ($\pm$ 0.83)}        & \mbox{\,71.18 ($\pm$ 2.95)}        & \mbox{\,59.31 ($\pm$ 1.22)}          & \mbox{\,83.71 ($\pm$ 0.37)}        & \mbox{\textbf{86.42} ($\pm$ 0.39)} & \mbox{\textbf{61.58} ($\pm$ 0.20)} \\
                \mbox{\textbf{\textsc{Conf+P.}\ (no)}} & \mbox{\textbf{91.74} ($\pm$ 0.34)} & \mbox{\textbf{78.38} ($\pm$ 0.50)} & \mbox{\textbf{66.88} ($\pm$ 0.76)}   & \mbox{\textbf{87.31} ($\pm$ 0.30)} & \mbox{\,85.39 ($\pm$ 0.49)}        & \mbox{\,61.50 ($\pm$ 0.20)}        \\
                \mbox{\textsc{Conf+Proden}}            & \mbox{\,91.55 ($\pm$ 0.23)}        & \mbox{\,78.09 ($\pm$ 0.33)}        & \mbox{\,66.43 ($\pm$ 0.38)}          & \mbox{\,86.99 ($\pm$ 0.41)}        & \mbox{\,85.29 ($\pm$ 0.44)}        & \mbox{\,61.45 ($\pm$ 0.49)}        \\
                \midrule
                \mbox{\textbf{\textsc{Cc}} (2020)}     & \mbox{\textbf{86.29} ($\pm$ 2.18)} & \mbox{\textbf{66.19} ($\pm$ 2.77)} & \mbox{\textbf{58.29} ($\pm$ 0.32)}   & \mbox{\,83.40 ($\pm$ 0.42)}        & \mbox{\textbf{85.61} ($\pm$ 0.27)} & \mbox{\,60.43 ($\pm$ 0.53)}        \\
                \mbox{\textsc{Conf+Cc}}                & \mbox{\,85.20 ($\pm$ 4.16)}        & \mbox{\,59.75 ($\pm$ 2.68)}        & \mbox{\,57.07 ($\pm$ 0.66)}          & \mbox{\textbf{84.32} ($\pm$ 0.31)} & \mbox{\,84.10 ($\pm$ 0.38)}        & \mbox{\textbf{60.49} ($\pm$ 0.37)} \\
                \midrule
                \mbox{\textsc{Valen} (2021)}           & \mbox{\textbf{78.91} ($\pm$ 0.80)} & \mbox{\,66.53 ($\pm$ 2.65)}        & \mbox{\,58.48 ($\pm$ 0.45)}          & \mbox{\,54.87 ($\pm$ 15.83)}       & \mbox{\textbf{84.83} ($\pm$ 0.23)} & \mbox{\,58.67 ($\pm$ 0.17)}        \\
                \mbox{\textbf{\textsc{Conf+Valen}}}    & \mbox{\,74.20 ($\pm$ 21.99)}       & \mbox{\textbf{69.09} ($\pm$ 2.71)} & \mbox{\textbf{60.95} ($\pm$ 2.59)}   & \mbox{\textbf{78.31} ($\pm$ 3.15)} & \mbox{\,84.35 ($\pm$ 0.22)}        & \mbox{\textbf{59.57} ($\pm$ 0.71)} \\
                \midrule
                \mbox{\textbf{\textsc{Cavl}} (2022)}   & \mbox{\,71.11 ($\pm$ 3.92)}        & \mbox{\textbf{59.85} ($\pm$ 6.49)} & \mbox{\,48.15 ($\pm$ 5.07)}          & \mbox{\textbf{72.57} ($\pm$ 3.14)} & \mbox{\textbf{84.00} ($\pm$ 0.94)} & \mbox{\textbf{61.97} ($\pm$ 0.25)} \\
                \mbox{\textsc{Conf+Cavl}}              & \mbox{\textbf{71.86} ($\pm$ 4.57)} & \mbox{\,59.54 ($\pm$ 6.62)}        & \mbox{\textbf{52.14} ($\pm$ 3.89)}   & \mbox{\,70.53 ($\pm$ 2.94)}        & \mbox{\,82.82 ($\pm$ 1.58)}        & \mbox{\,61.79 ($\pm$ 0.36)}        \\
                \midrule
                \mbox{\textsc{Pop} (2023)}             & \mbox{\,87.08 ($\pm$ 0.58)}        & \mbox{\,72.30 ($\pm$ 2.63)}        & \mbox{\,60.63 ($\pm$ 1.15)}          & \mbox{\,83.69 ($\pm$ 0.28)}        & \mbox{\textbf{86.76} ($\pm$ 0.29)} & \mbox{\,61.27 ($\pm$ 0.60)}        \\
                \mbox{\textbf{\textsc{Conf+Pop}}}      & \mbox{\textbf{91.19} ($\pm$ 0.29)} & \mbox{\textbf{79.15} ($\pm$ 1.23)} & \mbox{\textbf{67.37} ($\pm$ 0.28)}   & \mbox{\textbf{85.89} ($\pm$ 0.48)} & \mbox{\,85.32 ($\pm$ 0.38)}        & \mbox{\textbf{61.38} ($\pm$ 0.30)} \\
                \midrule
                \mbox{\textsc{CroSel} (2024)}          & \mbox{\,91.84 ($\pm$ 0.44)}        & \mbox{\,76.34 ($\pm$ 1.21)}        & \mbox{\textbf{65.55} ($\pm$ 0.81)}   & \mbox{\,75.95 ($\pm$ 3.91)}        & \mbox{\textbf{87.32} ($\pm$ 0.22)} & \mbox{\,63.69 ($\pm$ 0.29)}        \\
                \mbox{\textbf{\textsc{Conf+CroSel}}}   & \mbox{\textbf{91.85} ($\pm$ 0.61)} & \mbox{\textbf{77.31} ($\pm$ 0.46)} & \mbox{\,64.73 ($\pm$ 1.52)}          & \mbox{\textbf{77.70} ($\pm$ 3.84)} & \mbox{\,87.05 ($\pm$ 0.09)}        & \mbox{\textbf{64.55} ($\pm$ 0.31)} \\
                \bottomrule
            \end{tabular}
        \end{small}
    \end{center}
\end{table*}

In \textbf{phase~1} (Lines~6--7), we use the current model $f$ to predict the labels on the hold-out validation set $\D_v$, which are required for the computations in phase~3.

In \textbf{phase~2} (Lines~8--10), we update the weights $\theta$ of the neural network $f$ by performing back-propagation on the risk term~\eqref{eq:erm}.
As our candidate cleaning method is agnostic to the concrete PLL classifier used, one can also use other commonly-used PLL strategies instead.

In \textbf{phase~3} (Lines~11--18), we compute the conformal predictor $C$, which is used to clean the candidate sets.
After completing $R_{\operatorname{warmup}}$ warm-up epochs, we start with our pruning procedure.
In Line~13, we compute $\alpha_r$ for the current epoch $r$.
While it is desirable to use the exact value of $\alpha''$ in~\eqref{eq:corrected-conf} in Line~13 of Algorithm~\ref{alg:method}, its computation is unfortunately infeasible as the constants $B$ and $\beta$, for which the Bernstein condition (Assumption~\ref{def:bernstein}) holds, cannot be known unless the true distribution $\prob_{XY}$ is known.
As the employed PLL classifiers are consistent, that is, they converge to the Bayes classifier with enough samples, we approximate the estimation error terms in~\eqref{eq:corrected-conf} by the probability mass that the PLL classifier allocates on false class labels, that is, class labels that are not part of the candidate sets and hence cannot be the correct label.
Given $(x_i, s_i) \in \D_t$, we set $\alpha_r = \frac{1}{n'} \sum_{i = 1}^{n'} \sum_{j \notin s_i} f_j(x_i)$ with $n' = |\D_t|$.
Then, we compute the conformal prediction sets $C(x_i)$ in Line~15 for all training instances $(x_i, s_i) \in \D_t$ using the empirical distribution function of the adapted scores on the validation set; this conformal predictor $C$ is valid by Theorem~\ref{thm:main}.
We use the conformal sets $C(x_i)$ to prune the candidate sets $s_i$.
If $C(x_i)$ and $s_i$ have a nonempty intersection, which is implied with high probability by the conformal validity (Theorem~\ref{prop:pll-valid}), we assign $s_i \cap C(x_i)$ to $s_i$ in Line~17.

Finally, in \textbf{phase~4} (Lines~21--22), we update the label weights $w_{ij}$ based on the cleaned candidate sets $s_i$ with~\eqref{eq:label-update}.

\subsection{Runtime Complexity}
\label{sec:runtime}
The main runtime cost of our cleaning method arises from the computation of the conformal sets $C(x_i)$ in Line~15 of Algorithm~\ref{alg:method}.
Finding the rank of $f_y(x_i)$ within $\esS$ can be done by first sorting $\esS$ and then using a binary search.
This requires a total runtime of $\mathcal{O}(R n \log n)$, as we prune candidate labels in each epoch and, both, the training set $\D_t$ and validation set $\D_v$ have a size of $\mathcal{O}(n)$.
Note that the runtime of our method is not dependent on the number of feature dimensions $d$ as the considered scores $\esS$ are scalars.

\subsection{Placement regarding Related Work}
\label{sec:connection-pop}
In this section, we provide a brief comparison of our cleaning strategy with the one employed by \textsc{Pop} \citep{0009LLQG23}.
\textsc{Pop} uses level sets, which we sketch in the following.
Let $e > 0$, $(x_i, s_i) \in \D$, the predicted label $\hat{y}_{x_i} = \arg\max_{j \in s_i} f_j(x_i)$, and the second-most likely label $\hat{o}_{x_i} = \arg\max_{j \in s_i, j \neq \hat{y}} f_j(x_i)$.
The level sets are of the form $L(e) = \{ x \in \X : f_{\hat{y}_{x}}(x) - f_{\hat{o}_{x}}(x) \geq e \}$ to gradually clean the candidate labels for instances in $L(e)$.
In other words, one is confident in the predicted labels if the distance between the most likely and second-most likely label exceeds some margin.
Given $x \in L(e)$, this implies
\begin{align}
                                               & f_{\hat{y}_{x}}(x) - f_{\hat{o}_{x}}(x) \geq e                                                                                \\
    \overset{(\dagger)}{\Leftrightarrow} \quad & 2 f_{\hat{y}_{x}}(x) - 1 + \underbrace{ \sum_{j' \in \Y \setminus \{ \hat{y}_{x}, \hat{o}_{x} \}} f_{j'}(x) }_{\leq 1} \geq e \\[0.1em]
    \Rightarrow \quad                          & f_{\hat{y}_{x}}(x) \geq \frac{1}{2} e \text{,}
    \label{eq:pop}
\end{align}
with $(\dagger)$ holding as
$f_{\hat{o}_{x}}(x) = 1 - \sum_{j' \in \Y \setminus \{ \hat{y}_{x}, \hat{o}_{x} \}} f_{j'}(x) - f_{\hat{y}_{x}}(x)$.
\textsc{Pop} gradually decreases the value of $e$ to enlarge the reliable region $L(e)$, which in turn requires $f_{\hat{y}_{x}}(x) \geq \frac{1}{2} e$ by~\eqref{eq:pop}.
In contrast, in Theorem~\ref{thm:main}, we find an appropriate value $t$ such that $f_{\hat{y}_{x}}(x) \geq t$ holds with a specified probability.
The conformal predictor $C$ can therefore also be interpreted as a level set.
However, our approach satisfies the conformal validity guarantee.

\section{Experiments}
\label{sec:exp}
Section~\ref{sec:methods} lists all PLL methods that we compare against,
Section~\ref{sec:setup} summarizes the experimental setup,
and Section~\ref{sec:results} shows our results.

\subsection{Algorithms for Comparison}
\label{sec:methods}
In our experiments, we benchmark six state-of-the-art PLL methods.
These are \textsc{Proden} \citep{LvXF0GS20}, \textsc{Cc} \citep{FengL0X0G0S20}, \textsc{Valen} \citep{XuQGZ21}, \textsc{Cavl} \citep{ZhangF0L0QS22}, \textsc{Pop} \citep{0009LLQG23}, and \textsc{CroSel} \citep{crosel2024}.
For each dataset, we use the same base models across all approaches.
For the colored-image datasets, we use a ResNet-9 architecture \citep{HeZRS16}.
Else, we use a standard $d$-300-300-300-$k$ MLP \citep{werbos1974beyond}.
We train all models from scratch.
An in-depth overview of all hyperparameters is in Appendix~\ref{sec:hypar}.
Appendix~\ref{sec:more-exp} contains additional experiments, including the use of the pre-trained \textsc{Blip-2} model \citep{0008LSH23} on the vision datasets.

\subsection{Experimental Setup}
\label{sec:setup}

\paragraph{Data.}
Using the standard PLL experimentation protocol \citep{LvXF0GS20,ZhangF0L0QS22,crosel2024}, we perform experiments on real-world PLL datasets and on supervised datasets with artificially added incorrect candidate labels.
To report averages and standard deviations, we repeat all experiments five times with different seeds.
For the supervised multi-class datasets, we use \emph{mnist} \citep{uci-mnist}, \emph{fmnist} \citep{uci-fmnist}, \emph{kmnist} \citep{uci-kmnist}, \emph{cifar10} \citep{Krizhevsky2009LearningML}, \emph{cifar100} \citep{Krizhevsky2009LearningML}, and \emph{svhn} \citep{NetzerSVHN11}.
Regarding the real-world PLL datasets, we use \emph{bird-song} \citep{BriggsFR12}, \emph{lost} \citep{CourST11}, \emph{mir-flickr} \citep{HuiskesL08}, \emph{msrc-v2} \citep{LiuD12}, \emph{soccer} \citep{ZengXJCGXM13}, and \emph{yahoo-news} \citep{GuillauminVS10}.
An overview of the dataset characteristics is in~Appendix~\ref{sec:hypar}.

\paragraph{Candidate generation.}
As is common in related work, we use two kinds of candidate label generation methods to augment labeled multi-class data with partial labels: Uniform \citep{HullermeierB06,LiuD12} and instance-dependent \citep{XuQGZ21}.
For \emph{cifar10} and \emph{cifar100}, we use the uniform generation strategy as in \citet{WangXLF0CZ22} and the instance-dependent strategy for all other datasets.
For adding instance-dependent candidate labels, we first train a supervised MLP classifier $g : \X \to [0, 1]^k$.
Then, given an instance $x \in \X$ with correct label $y \in \Y$, we add the incorrect label $\bar{y} \in \Y \setminus \{y\}$ to the candidate set $s$ with a binomial flipping probability of
$\xi_{\bar{y}}(x) = g_{\bar{y}}(x) / \max_{y' \in \Y \setminus \{y\}} g_{y'}(x)$.
For \emph{cifar10}, we use a constant flipping probability of $\xi_{\bar{y}}(x) = 0.1$.
In the \emph{cifar100} dataset, all class labels $\Y$ are partitioned into 20 meta-categories (for example, aquatic mammals consisting of the labels beaver, dolphin, otter, seal, and whale) and we use a constant flipping probability of $\xi_{\bar{y}}(x) = 0.1$ if $\bar{y}$ and $y$ belong to the same meta-category, else we set $\xi_{\bar{y}}(x) = 0$.

\subsection{Results}
\label{sec:results}

\begin{table}[t!]
    \caption{
        Number of significant differences compared to all 6 methods on all 12 datasets using a paired t-test (level \SI{5}{\%}).
    }
    \label{tab:comp}
    \begin{center}
        \begin{small}
            \begin{tabular}{
                >{\raggedright\arraybackslash}m{0.54\linewidth}
                >{\centering\arraybackslash}m{0.07\linewidth}
                >{\centering\arraybackslash}m{0.08\linewidth}
                >{\centering\arraybackslash}m{0.09\linewidth}
                }
                \toprule
                Comparison vs. all others            & Wins         & Ties        & Losses       \\
                \midrule
                \textsc{Proden} (2020)               & 26           & \textbf{36} & 10           \\
                \textsc{Conf+Proden} (no correction) & \textbf{37}  & 24          & 11           \\
                \textsc{Conf+Proden}                 & \textbf{44}  & 21          & \phantom{0}7 \\
                \midrule
                \textsc{Cc} (2020)                   & 17           & \textbf{36} & 19           \\
                \textsc{Conf+Cc}                     & 19           & \textbf{28} & 25           \\
                \midrule
                \textsc{Valen} (2021)                & \phantom{0}3 & 31          & \textbf{38}  \\
                \textsc{Conf+Valen}                  & \phantom{0}4 & 26          & \textbf{42}  \\
                \midrule
                \textsc{Cavl} (2022)                 & \phantom{0}8 & 28          & \textbf{36}  \\
                \textsc{Conf+Cavl}                   & \phantom{0}5 & 29          & \textbf{38}  \\
                \midrule
                \textsc{Pop} (2023)                  & 27           & \textbf{38} & \phantom{0}7 \\
                \textsc{Conf+Pop}                    & \textbf{44}  & 22          & \phantom{0}6 \\
                \midrule
                \textsc{CroSel} (2024)               & \textbf{36}  & 29          & \phantom{0}7 \\
                \textsc{Conf+CroSel}                 & \textbf{49}  & 19          & \phantom{0}4 \\
                \bottomrule
            \end{tabular}
        \end{small}
    \end{center}
\end{table}

\paragraph{Predictive performance.}
Table~\ref{tab:acc} presents the average test-set accuracies for all competitors on all datasets.
We benchmark our conformal candidate cleaning technique combined with all approaches in Section~\ref{sec:methods}, which is marked by \textsc{Conf+Method}.
An overview of significant differences is in Table~\ref{tab:comp}.
There, we compare the respective method to all the other approaches.
All significance tests use a paired student t-test with a confidence level of \SI{5}{\%}.

The approaches \textsc{Conf+Proden}, \textsc{Conf+Pop}, and \textsc{Conf+CroSel} that combine the respective approaches with our candidate cleaning strategy win most often (Table~\ref{tab:comp}).
Conformal candidate cleaning makes \textsc{Proden} win 18 more direct comparisons, \textsc{Pop} win 17 more direct comparisons, and \textsc{CroSel} win 13 more direct comparisons advancing the state-of-the-art prediction performance.
These methods significantly benefit from our pruning.

The approaches \textsc{Cc}, \textsc{Valen}, and \textsc{Cavl} yield similar performances when combined with conformal candidate cleaning.
For \textsc{Valen} and \textsc{Cavl}, we attribute this to the fact that their methods already use pseudo-labeling internally, that is, they treat the most likely label as the possible correct label, which diminishes the positive effect of pruning candidates.

\paragraph{Ablation study.}
Additionally, we perform an ablation experiment regarding our correction method proposed in Theorem~\ref{thm:main}.
The approach \textsc{Conf+Proden} (no correction) uses conformal predictions based on the labels provided by the PLL classifier without our proposed correction method, which is equivalent to a fixed $\alpha_r$.
Table~\ref{tab:comp} shows that, while \textsc{Conf+Proden} (no correction) is already a significant improvement over \textsc{Proden}, our PLL correction strategy improves performance even further by incorporating the possible approximation error of the trained classifier.
We limit our ablation study to \textsc{Proden} due to runtime constraints.

Our experiments show that the proposed method yields significant improvements over a wide range of existing PLL models (\textsc{Proden}, \textsc{Pop}, and \textsc{CroSel}) and advances the state-of-the-art prediction performance with the method \textsc{Conf+CroSel}.

%
%

\begin{acknowledgements}
    This work was supported by the German Research Foundation (DFG) Research Training Group GRK 2153: \emph{Energy Status Data --- Informatics Methods for its Collection, Analysis and Exploitation} and by the pilot program Core-Informatics of the Helmholtz Association (HGF).
\end{acknowledgements}

%
%

\bibliography{references}

%
%

\newpage
\onecolumn

\title{Partial-Label Learning with Conformal Candidate Cleaning\\(Supplementary Material)}
\maketitle

\appendix
\section{Proofs}
\label{sec:proofs}
This section collects our proofs.
Section~\ref{sec:proof1} contains the proof of Theorem~\ref{prop:pll-valid}, Section~\ref{sec:proof2} that of Lemma~\ref{lemma:main}, and Section~\ref{sec:proof3} that of Theorem~\ref{thm:main}.

\subsection{Proof of Theorem~\ref{prop:pll-valid}}
\label{sec:proof1}
Let $C$ be an optimal solution of~\eqref{eq:opt-supervised}.
Then, we have
\begin{align}
    \prob_{XS}\!\left( S \cap C(X) \neq \emptyset \right)
     & = 1 - \prob_{XS}\!\left( S \cap C(X) = \emptyset \right)
    = 1 - \prob_{XS}\!\left( \forall y \in S, y \notin C(X) \right)
    \geq 1 - \prob_{XS}\!\left( \exists y \in S, y \notin C(X) \right)   \notag                                                                    \\
     & \overset{(a)}{=} 1 - \sum_{y \in \Y} \prob\!\left( Y = y, y \in S, y \notin C(X) \right)
    = 1 - \prob\!\left( Y \in S, Y \notin C(X) \right)                                                                                             \\
     & \overset{(b)}{=} 1 - \int_{\X \times \Y} \prob_{S \mid X = x, Y = y}\!\left(y \in S, y \notin C(x) \right) \, \mathrm{d} \prob_{XY}( x, y ) \\
     & \overset{(c)}{=} 1 - \int_{\X \times \Y}
    \underbrace{ \prob_{S \mid X = x, Y = y}\!\left( y \in S \right) }_{\overset{(d)}{=} 1}
    \prob_{S \mid X = x, Y = y, y \in S}\!\left( y \notin C(x) \right) \,
    \mathrm{d} \prob_{XY}( x, y )                                                                                                                  \\
     & = 1 - \int_{\X \times \Y}
    \prob_{S \mid X = x, Y = y, y \in S}\!\left( y \notin C(x) \right) \,
    \mathrm{d} \prob_{XY}( x, y )                                                                                                                  \\
     & \overset{(e)}{=} 1 - \int_{\X \times \Y} \prob_{S \mid X = x, Y = y}\!\left( y \notin C(x) \right) \, \mathrm{d} \prob_{XY}( x, y )         \\
     & \overset{(f)}{=} 1 - \int_{\X \times \Y} \mathds{1}_{\{ y \notin C(x) \}} \, \mathrm{d} \prob_{XY}( x, y )                                  \\
     & = 1 - \prob_{XY}\!\left( Y \notin C(X) \right)
    = \prob_{XY}\!\left( Y \in C(X) \right)
    \overset{(g)}{\geq} 1 - \alpha \text{,}
\end{align}
where
$(a)$ is implied by the law of total probability holding for the discrete $Y$ taking mutually exclusive values in $y \in \Y$,
$(b)$ holds by the tower rule,
$(c)$ holds by the chain rule of conditional probability,
$(d)$ holds as $\prob_{S \mid X = x, Y = y}(y \in S) = 1$ for any $(x, y) \in \X \times \Y$,
$(e)$ holds by independence,
$(f)$ holds as $\prob_{S \mid X = x, Y = y}(y \notin C(x))$ is either one if $y \notin C(x)$ or zero if $y \in C(x)$, and
$(g)$ holds by our imposed assumption.

\subsection{Proof of Lemma~\ref{lemma:main}}
\label{sec:proof2}

We prove parts (i) and (ii) separately in the following.

\paragraph{Proof of $(i)$.} To proof the result, we first show that for any $\hat f$, one has the expectation bound
\begin{align}
    \expected_{XY}\!\left[ \left| \hat{f}_Y(X) - f^*_Y(X) \right| \right]
    \leq \frac{\lambda \sqrt{B}}{2^{\frac{1}{2}\beta}} \left( R(\hat{f}) - R(f^*) \right)^{\frac{1}{2}\beta},\label{eq:lemma-first-step}
\end{align}
for some constants $\beta \in (0, 1]$ and $B, \lambda > 0$. We then apply a known result (recalled in Theorem~\ref{thm:estimation-error}) to obtain the stated concentration inequality.
The details are as follows.

To prove \eqref{eq:lemma-first-step}, notice that
\begin{align}
    \MoveEqLeft\expected_{XY}\!\left[ \left| \hat{f}_Y(X) - f^*_Y(X) \right| \right]
    = \left( \left( \expected_{XY}\!\left[ \left| \hat{f}_Y(X) - f^*_Y(X) \right| \right] \right)^2 \right)^{\frac{1}{2}}
    \overset{(a)}{\leq} \left( \expected_{XY}\!\left[ \left| \hat{f}_Y(X) - f^*_Y(X) \right|^2 \right] \right)^{\frac{1}{2}}                                                \\
     & \overset{(b)}{\leq} \lambda \left( \expected_{XY}\!\left[ \left| \ell(\hat{f}(X), Y) - \ell(f^*(X), Y) \right|^2 \right] \right)^{\frac{1}{2}}
    = \lambda \left( \expected_{XY}\!\left[ \left( \ell(\hat{f}(X), Y) - \ell(f^*(X), Y) \right)^2 \right] \right)^{\frac{1}{2}}                                            \\
     & \overset{(c)}{\leq} \lambda \sqrt{B} \left( \expected_{XY}\!\left[ \ell(\hat{f}(X), Y) - \ell(f^*(X), Y) \right] \right)^{\frac{1}{2}\beta}
    \overset{(d)}{=} \lambda \sqrt{B} \left( \expected_{XY}\!\left[ \ell(\hat{f}(X), Y) \right] - \expected_{XY}\!\left[ \ell(f^*(X), Y) \right] \right)^{\frac{1}{2}\beta} \\
     & \overset{(e)}{=} \lambda \sqrt{B} \frac{1}{2^{\frac{1}{2}\beta}} \left( R(\hat{f}) - R(f^*) \right)^{\frac{1}{2}\beta} \text{,}
\end{align}
using the following observations.
(a) is implied by Jensen's inequality.
Next, we note that $z \mapsto -\log(z)$ satisfies the $\lambda$-bi-Lipschitz condition on $[\epsilon,1]$ (Lemma~\ref{lemma:lipschitz}), implying that
\begin{align}
    \left|\hat f_Y(X) - f^*_Y(X)\right| \le \lambda \left|-\log\left(\hat f_Y(X)\right) -\left(-\log\left( f_Y^*(X)\right)\right)\right| = \left|\ell\left( \hat f(X), Y\right)-\ell\left(f^*(X), Y\right)\right|, \label{eq:lipschitz-inequality}
\end{align}
where we used the definition of $\ell$ for the equality. Using \eqref{eq:lipschitz-inequality} together with the monotonicity of the $L_2$-norm yields (b).
$(c)$ holds by the assumed $(\beta, B)$-Bernstein condition. The linearity of expectations gives (d) and an identity recalled in Theorem~\ref{thm:sup-risk} yields (e).

Now, to obtain the probabilistic bound, we observe that
\begin{align}
    \MoveEqLeft\prob_n\!\Bigg( \expected_{XY}\!\left[ | \hat{f}_Y(X) - f^*_Y(X) | \right]
    \leq M_1 \left( \frac{\log(1 / \delta_1)}{n} \right)^{\frac{1}{4}\beta} \Bigg)                                                                                                                                                                     \\
     & \overset{\eqref{eq:lemma-first-step}}{\ge} \prob_n\!\Bigg( \lambda \sqrt{B} \frac{1}{2^{\frac{1}{2}\beta}} \left( R(\hat{f}) - R(f^*) \right)^{\frac{1}{2}\beta} \leq M_1 \left( \frac{\log(1 / \delta_1)}{n} \right)^{\frac{1}{4}\beta} \Bigg) \\
     & \overset{(a)}{=} \prob_n\!\Bigg( \lambda^{\frac 2\beta} B^{\frac1\beta} \frac{1}{2} \left( R(\hat{f}) - R(f^*) \right) \leq M_1 \left( \frac{\log(1 / \delta_1)}{n} \right)^{\frac{1}{2}} \Bigg)
    \overset{(b)}{\geq} 1 - \delta_1 .
\end{align}
In (a), we notice that both sides of the inequality are nonnegative and apply the function $z \mapsto z^{2/\beta}$, which is monotonically increasing on $\mathbb R^+$. We conclude the proof of part (i) with an application of Theorem~\ref{thm:estimation-error} in (b), where we let $M_1 = M  \lambda^{\frac 2\beta} B^{\frac1\beta} \frac{1}{2}$ (with $M$ defined in the external result).

\paragraph{Proof of $(ii)$.} The proof of the lemma proceeds in three steps. In step 1, we will show that
\begin{align}
    \expected_{XY}\!\left[ \mathds{1}_{\{ \arg\max_{j \in \Y} \hat{f}_j(X) \neq \arg\max_{j \in \Y} f^*_j(X) \}} \right]
    \le  \frac{1}{1 - \delta_5} \expected_{XY}\!\left[ \left(\mathds{1}_{\{ \arg\max_{j \in \Y} \hat{f}_j(x) \neq y \}} - \mathds{1}_{\{ \arg\max_{j \in \Y} f^*_j(x) \neq y \}}\right)^2 \right],
\end{align}
which will allow us to obtain, in step 2, that for some constants $\beta \in (0, 1]$, $B > 0$ and $\delta_5 \in [0, 1)$, one has
\begin{align}
    \prob_X\!\left[ \arg\max_{j \in \Y} \hat{f}_j(X) \neq \arg\max_{j \in \Y} f^*_j(X) \right]
    \leq \frac{B}{1 - \delta_5} \left( R(\hat{f}) - R(f^*) \right)^\beta \text{.} \label{eq:part-ii-probability-bound}
\end{align}
The result will then follow by an application of Theorem~\ref{thm:estimation-error}, which we elaborate in step 3. The details are as follows.

\textbf{Step 1.} Note that we have
\begin{align}
    1 \overset{(a)}{\le} \frac{1}{1-\delta_5} \prob_{Y \mid X = x}\left( Y \in \{ \hat{y}_x, y^*_x \}\right) \overset{(b)}{\le}  \frac{1}{1-\delta_5}\sum_{y\in \{\hat{y}_x, y^*_x\}} \prob_{Y \mid X = x}\left( Y = y \right), \label{eq:part-ii-assumption}
\end{align}
with the assumption used in (a) and a union bound implying (b).

To conclude the first step, we obtain
\begin{align}
    \MoveEqLeft\expected_{XY}\!\left[ \mathds{1}_{\{ \arg\max_{j \in \Y} \hat{f}_j(X) \neq \arg\max_{j \in \Y} f^*_j(X) \}} \right]
    \\
     & \overset{(a)}{\leq} \frac{1}{1 - \delta_5} \expected_{X}\!\left[ \sum_{y \in \{ y^*_X, \hat{y}_X \}} \prob_{Y \mid X}(Y = y) \mathds{1}_{\{ \hat{y}_X \neq y_X^* \}} \right]                                                             \\
     & \overset{(b)}{=} \frac{1}{1 - \delta_5} \expected_{X}\!\left[ \sum_{y \in \{ y_X^*, \hat{y}_X \}} \prob_{Y \mid X}(Y = y) (\mathds{1}_{\{ \hat{y}_X \neq y \}} - \mathds{1}_{\{ y_X^* \neq y \}})^2 \right]                              \\
     & \overset{(c)}{\leq} \frac{1}{1 - \delta_5} \expected_{X}\!\left[ \sum_{y \in \Y} \prob_{Y \mid X}(Y = y) (\mathds{1}_{\{ \hat{y}_X \neq y \}} - \mathds{1}_{\{ y_X^* \neq y \}})^2 \right]                                               \\
     & \overset{(d)}{=} \frac{1}{1 - \delta_5} \expected_{XY}\!\left[ \left(\mathds{1}_{\{ \arg\max_{j \in \Y} \hat{f}_j(x) \neq y \}} - \mathds{1}_{\{ \arg\max_{j \in \Y} f^*_j(x) \neq y \}}\right)^2 \right] \label{eq:lemma-ii-first-step}
    \text{,}
\end{align}
where (a) is implied by \eqref{eq:part-ii-assumption} and the indicator function being nonnegative. For (b), we must show that  $\mathds{1}_{\{ \hat{y}_x \neq y_x^* \}} = (\mathds{1}_{\{ \hat{y}_x \neq y \}} - \mathds{1}_{\{ y_x^* \neq y \}})^2$ for any (fixed) $x \in \X$ and $y \in \{ \hat{y}_x, y_x^* \}$; it suffices to check the three cases.
\begin{itemize}
    \item If $y = \hat{y}_x = y_x^*$, then $0 = 0$,
    \item if $\hat{y}_x \neq y_x^*$ and $y = \hat{y}_x$, then $1 = 1$, and
    \item if $\hat{y}_x = y_x^*$ and $y \neq \hat{y}_x$, then $1 = 1$.
\end{itemize}
In (c), we add nonnegative terms and (d) holds by the definition of the expectation.

\textbf{Step 2.} We relax the l.h.s.\ in \eqref{eq:part-ii-probability-bound} to
\begin{align}
    \MoveEqLeft\prob_X\!\left[ \arg\max_{j \in \Y} \hat{f}_j(X) \neq \arg\max_{j \in \Y} f^*_j(X) \right]
    \overset{(a)}{=} \expected_{X}\!\left[ \mathds{1}_{\{ \arg\max_{j \in \Y} \hat{f}_j(X) \neq \arg\max_{j \in \Y} f^*_j(X) \}} \right]                                                                 \\
     & \overset{(b)}{=} \expected_{XY}\!\left[ \mathds{1}_{\{ \arg\max_{j \in \Y} \hat{f}_j(X) \neq \arg\max_{j \in \Y} f^*_j(X) \}} \right]                                                             \\
     & \overset{(c)}{\leq} \frac{1}{1 - \delta_5} \expected_{XY}\!\left[ (\mathds{1}_{\{ \arg\max_{j \in \Y} \hat{f}_j(X) \neq Y \}} - \mathds{1}_{\{ \arg\max_{j \in \Y} f^*_j(X) \neq Y \}})^2 \right] \\
     & \overset{(d)}{=} \frac{1}{1 - \delta_5} \expected_{XY}\!\left[ (\ell(\hat{f}(X), Y) - \ell(f^*(X), Y))^2 \right]
    \overset{(e)}{\leq} \frac{B}{1 - \delta_5} \left( R(\hat{f}) - R(f^*) \right)^\beta  \text{,}
\end{align}
obtaining the r.h.s.\ and establishing \eqref{eq:part-ii-probability-bound}. The details are as follows. In (a), we use that a probability can be written as the expectation of an indicator function. We notice in (b) that the integrand does not depend on $Y$.
Regarding (c), with $\hat y_x, y_x^*$ defined as in the statement, we use \eqref{eq:lemma-ii-first-step} obtained in step 1. Defining $\ell  : [0, 1]^k \to \mathbb{R}_{\geq 0}$, $(p, y) \mapsto \mathds{1}_{\{ \arg\max_{y' \in \Y} p_{y'} \neq y \}}$ as the usual 0-1-loss yields (d) and  the $(\beta, B)$-Bernstein condition gives (e).

\textbf{Step 3.} It remains to obtain the probabilistic bound. We have that
\begin{align}
    \MoveEqLeft\mathbb P_n\left(\prob_X\!\left[ \arg\max_{j \in \Y} \hat{f}_j(X) \neq \arg\max_{j \in \Y} f^*_j(X) \right]  \leq M_2 \left( \frac{\log(1 / \delta_2)}{n} \right)^{\frac{1}{2}\beta} \right)                                            \\
     & \overset{\eqref{eq:part-ii-probability-bound}}{\ge} \mathbb P_n\left(\frac{B}{1 - \delta_5} \left( R(\hat{f}) - R(f^*) \right)^\beta  \leq M_2 \left( \frac{\log(1 / \delta_2)}{n} \right)^{\frac{1}{2}\beta} \right)                           \\
     & \overset{(a)}{=} \mathbb P_n\left(\left(\frac{B}{1 - \delta_5}\right)^{\frac1\beta} \left( R(\hat{f}) - R(f^*) \right)  \leq M_2^{\frac1\beta} \left( \frac{\log(1 / \delta_2)}{n} \right)^{\frac{1}{2}} \right) \overset{(b)}{\ge} 1-\delta_2,
\end{align}
where we apply the monotonically increasing $z\mapsto z^{1/\beta}$ in (a). In (b), we set $M_2^{\frac1\beta}  = M \left(\frac{B}{1 - \delta_5}\right)^{\frac1\beta}$ and apply Theorem~\ref{thm:estimation-error} (with $M$ given there). This concludes the proof of (ii).

\subsection{Proof of Theorem~\ref{thm:main}}
\label{sec:proof3}
To obtain the statement, we first show that one has the following decomposition.
For any $\alpha \in (0, 1)$ and some $\delta_3 \in (0, 1)$,
\begin{align}
    \prob_{X}\!\left[ f^*_{y^*_{X}}(X) \geq t_\alpha - \delta_3 \right] \ge
    \underbrace{\prob_{X}\!\left[ f^*_{y^*_X}(X) \geq \hat{f}_{y^*_X}(X) - \delta_3 \right]}_{=:t_1}
    + \underbrace{\prob_{X}\!\left[ \hat{f}_{y^*_X}(X) = \hat{f}_{\hat{y}_X}(X) \right]}_{=:t_2}
    + \underbrace{\prob_{X}\!\left[ \hat{f}_{\hat{y}_X}(X) \geq t_\alpha \right]}_{=:t_3}
    - 2. \hspace{0.4cm} \label{eq:incl-excl}
\end{align}
We will then obtain lower bounds on the individual terms $t_1$, $t_2$, and $t_3$, and show that their combination implies the stated result.

\textbf{Decomposition.} Let
$A_1 = \{ X : f^*_{y^*_X}(X) \geq \hat{f}_{y^*_X}(X) - \delta_3 \}$,
$A_2 = \{ X : \hat{f}_{y^*_X}(X) = \hat{f}_{\hat{y}_X}(X) \}$,
$A_3 = \{ X : \hat{f}_{\hat{y}_X}(X) \geq t_\alpha \}$, and
$B = \{ X : f^*_{y^*_{X}}(X) \geq t_\alpha - \delta_3 \}$.
Using these definitions, we obtain that
\begin{align}
    \MoveEqLeft\prob_{X}\!\left[ f^*_{y^*_{X}}(X) \geq t_\alpha - \delta_3 \right]
    \overset{(a)}{=} \mathbb P_X\left[B\right]
    \overset{(b)}{\ge}
    \prob_{X}[ A_1 \cap A_2 \cap A_3 ]
    \overset{(c)}{=} 1 - \prob_{X}[ (A_1 \cap A_2 \cap A_3)^c ]
    \overset{(d)}{=} 1 - \prob_{X}[ A_1^c \cup A_2^c \cup A_3^c ]                                                                                                        \\
     & \overset{(e)}{\geq} 1 - \prob_{X}[ A_1^c ] - \prob_{X}[ A_2^c ] - \prob_{X}[ A_2^c ]
    \overset{(f)}{=} 1 - (1 - \prob_{X}[ A_1 ]) - (1 - \prob_{X}[ A_2 ]) - (1 - \prob_{X}[ A_3 ])                                                                        \\
     & \overset{(g)}{=} \underbrace{\prob_{X}[ A_1 ]}_{=t_1} + \underbrace{\prob_{X}[ A_2 ]}_{=t_2} + \underbrace{\prob_{X}[ A_3 ]}_{=t_3} - 2, \label{eq:decomposition}
\end{align}
with the following details. (a) is by the preceeding definition of the $B$ set. For (b), we have to show that $B \supseteq A_1 \cap A_2 \cap A_3$, which implies that $ \mathbb P_X\left[B\right] \ge \mathbb P_X\left[A_1 \cap A_2 \cap A_3\right]$. Let $x \in A_1 \cap A_2 \cap A_3$, then
\begin{align}
    f^*_{y^*_x}(x) \geq \hat{f}_{y^*_x}(x) - \delta_3 = \hat{f}_{\hat{y}_x}(x) - \delta_3 \geq t_\alpha - \delta_3
    \quad\implies\quad f^*_{y^*_x}(x) \geq t_\alpha - \delta_3 \text{.}
\end{align}
Therefore, $x \in B$, proving (b). (c) considers complementary events and De Morgan's laws yield (d). In (e), we use the inclusion-exclusion principle, where we ignore a few positive terms to obtain the inequality. Considering complementary events gives (f) and cancellations yield (g). This proves \eqref{eq:incl-excl}.

\textbf{Term $t_1$.} To obtain a bound on the first term, we obtain an expectation bound, which together with Markov's inequality and Lemma~\ref{lemma:main}~$(i)$ will give the result. The expectation bound is
\begin{align}
    \MoveEqLeft\expected_{X}\!\left[ | \hat{f}_{y^*_X}(X) - f^*_{y^*_X}(X) | \right]
    \overset{(a)}{\le}  \expected_{X}\!\left[ \frac{\prob_{Y \mid X}(Y = y^*_X) }{1 - \delta_6} \ | \hat{f}_{y^*_X}(X) - f^*_{y^*_X}(X) | \right]
    \\
     & \overset{(b)}{\le} \frac{1}{1 - \delta_6} \expected_{X}\!\Big[ \sum_{y \in \Y} \prob_{Y \mid X}(Y = y) \ | \hat{f}_{y}(X) - f^*_{y}(X) | \Big]
    \overset{(c)}{=} \frac{1}{1 - \delta_6} \expected_{XY}\!\Big[ | \hat{f}_{Y}(X) - f^*_{Y}(X) | \Big] \text{,}
    \label{eq:thm-step-c-2}
\end{align}
with (a) implied by the assumption $\prob_{Y \mid X}(Y = y^*_X) \geq 1 - \delta_6$ guaranteeing that $1\le \frac{\prob_{Y \mid X}(Y = y^*_X) }{1-\delta_6}$. In (b), we use that $y^*_X \in \mathcal Y$ and that all terms in the sum are nonnegative. Using a property of the expectation of a joint distribution yields (c).

Next, Markov's inequality (recalled in Lemma~\ref{lemma:markov}) implies that
\begin{align}
    \MoveEqLeft\prob_{X}\!\left[ | \hat{f}_{y^*_X}(X) - f^*_{y^*_X}(X) | \geq \delta_3 \right]
    \overset{\text{\ref{lemma:markov}}}{\leq} \frac{1}{\delta_3} \expected_{X}\!\left[ | \hat{f}_{y^*_X}(X) - f^*_{y^*_X}(X) | \right]
    \overset{\eqref{eq:thm-step-c-2}}{\leq} \frac{1}{\delta_3 (1 - \delta_6)} \expected_{XY}\!\left[ | \hat{f}_{Y}(X) - f^*_{Y}(X) | \right] \text{.}
    \label{eq:thm-step-c}
\end{align}

Finally, we have that
\begin{align}
    \MoveEqLeft \prob_n\!\left[
        \prob_{X}\!\left[ f^*_{y^*_X}(X) \geq \hat{f}_{y^*_X}(X) - \delta_3 \right]
        \geq 1 - \frac{1}{\delta_3 (1 - \delta_6)} M_1 \left( \frac{\log(1/\delta_1)}{n} \right)^{\frac{1}{4}\beta}
    \right]                                                 \\
     & \overset{(a)}{=} \prob_n\!\left[
    \prob_{X}\!\left[ \hat{f}_{y^*_X}(X) -f^*_{y^*_X}(X)  \le  \delta_3 \right]
    \geq 1 - \frac{1}{\delta_3 (1 - \delta_6)} M_1 \left( \frac{\log(1/\delta_1)}{n} \right)^{\frac{1}{4}\beta}
    \right]                                                 \\
     & \overset{(b)}{\ge} \prob_n\!\left[
    \prob_{X}\!\left[ \left|\hat{f}_{y^*_X}(X) -f^*_{y^*_X}(X)\right|  \le  \delta_3 \right]
    \geq 1 - \frac{1}{\delta_3 (1 - \delta_6)} M_1 \left( \frac{\log(1/\delta_1)}{n} \right)^{\frac{1}{4}\beta}
    \right]                                                 \\
     & \overset{(c)}{\ge} \prob_n\!\left[
    1-\prob_{X}\!\left[ \left|\hat{f}_{y^*_X}(X) -f^*_{y^*_X}(X)\right|  \le  \delta_3 \right]
    \le \frac{1}{\delta_3 (1 - \delta_6)} M_1 \left( \frac{\log(1/\delta_1)}{n} \right)^{\frac{1}{4}\beta}
    \right]                                                 \\
     & \overset{(d)}{\ge} \prob_n\!\left[
    \prob_{X}\!\left[ \left|\hat{f}_{y^*_X}(X) -f^*_{y^*_X}(X)\right|  >  \delta_3 \right]
    \le \frac{1}{\delta_3 (1 - \delta_6)} M_1 \left( \frac{\log(1/\delta_1)}{n} \right)^{\frac{1}{4}\beta}
    \right]                                                 \\
     & \overset{\eqref{eq:thm-step-c}}{\ge} \prob_n\!\left[
        \frac{1}{\delta_3 (1 - \delta_6)} \expected_{XY}\!\left[ | \hat{f}_{Y}(X) - f^*_{Y}(X) | \right]
        \le \frac{1}{\delta_3 (1 - \delta_6)} M_1 \left( \frac{\log(1/\delta_1)}{n} \right)^{\frac{1}{4}\beta}
    \right]                                                 \\
     & \overset{(e)}{\ge} \prob_n\!\left[
        \expected_{XY}\!\left[ | \hat{f}_{Y}(X) - f^*_{Y}(X) | \right]
        \le M_1 \left( \frac{\log(1/\delta_1)}{n} \right)^{\frac{1}{4}\beta}
        \right] \overset{(f)}{\ge} 1-\delta_1 \label{eq:term-t1}
    \text{,}
\end{align}
where we rearrange the l.h.s.\ of the inequality in (a). In (b), we consider the absolute value, decreasing the overall probability. In (c), we subtract $1$ on both sides and multiply by $-1$. In (d), we consider the complement of the event. In (e), we simplify and Lemma~\ref{lemma:main}(i) yields (f).

\textbf{Term $t_2$.} The observation $\prob_{X}\!\left[ \hat{f}_{y^*_X}(X) = \hat{f}_{\hat{y}_X}(X) \right] \geq \prob_{X}\!\left[ y^*_X = \hat{y}_X \right]$ implies that
\begin{align}
    \MoveEqLeft\prob_n\!\left[ \prob_{X}\!\left[ \hat{f}_{y^*_X}(X) = \hat{f}_{\hat{y}_X}(X) \right]
    \geq 1 - M_2 \left( \frac{\log(1/\delta_2)}{n} \right)^{\frac{1}{2}\beta} \right]  \overset{(a)}{\ge}  \prob_n\!\left[\prob_{X}\!\left[ y^*_X = \hat{y}_X \right]
    \ge 1 - M_2 \left( \frac{\log(1/\delta_2)}{n} \right)^{\frac{1}{2}\beta} \right]  \\
     & \overset{(b)}{=} \prob_n\!\left[\prob_{X}\!\left[ y^*_X \neq \hat{y}_X \right]
        \le  M_2 \left( \frac{\log(1/\delta_2)}{n} \right)^{\frac{1}{2}\beta} \right]
    \overset{(c)}{\geq} 1 - \delta_2, \label{eq:term-t2}
\end{align}
where (a) holds by the preceding observation. In (b), we subtract 1 on both sides, multiply by $-1$, and consider the complement of the l.h.s. Inequality (c) was shown in Lemma~\ref{lemma:main}(ii).

\textbf{Term $t_3$.} For bounding the third term, we use the well-known  Dvoretzky-Kiefer-Wolfowitz inequality (recalled in Theorem~\ref{thm:dkw})
In particular, we have
\begin{align}
    \MoveEqLeft\prob_n\!\left[
    \prob_X[ \hat{f}_{\hat{y}_X}(X) \geq t_\alpha ]
    \geq 1 - \left( \alpha + \sqrt{\frac{\log(2/\delta_4)}{2n}} \right)
    \right]
    \overset{(a)}{=} \prob_n\!\left[
    1 - F_{\hat{f}_{\hat{y}_X}(X)}(t_\alpha)
    \geq 1 - \left( \alpha + \sqrt{\frac{\log(2/\delta_4)}{2n}} \right)
    \right]                                                    \\
     & \overset{(b)}{=} \prob_n\!\left[
    F_{\hat{f}_{\hat{y}_X}(X)}(t_\alpha) - \alpha
    \le \sqrt{\frac{\log(2/\delta_4)}{2n}}
    \right]
    \overset{(c)}{\ge} \prob_n\!\left[
    F_{\hat{f}_{\hat{y}_X}(X)}(t_\alpha) - \hat{F}_{\hat{f}_{\hat{y}_X}(X)}(t_\alpha)
    \le \sqrt{\frac{\log(2/\delta_4)}{2n}}
    \right]                                                    \\
     & \overset{(d)}{\ge} \prob_n\!\left[\sup_{t_\alpha}\left|
    F_{\hat{f}_{\hat{y}_X}(X)}(t_\alpha) - \hat{F}_{\hat{f}_{\hat{y}_X}(X)}(t_\alpha)\right|
    \overset{(e)}{\le} \sqrt{\frac{\log(2/\delta_4)}{2n}}
    \right]
    \overset{(e)}{\ge} 1 - \delta_4 \text{,} \label{eq:term-t3}
\end{align}
where (a) holds as
\begin{align}
    \prob_X[ \hat{f}_{\hat{y}_X}(X) \geq t_\alpha ]
    = 1 - \prob_X[ \hat{f}_{\hat{y}_X}(X) \leq t_\alpha ]
    = 1 - F_{\hat{f}_{\hat{y}_X}(X)}(t_\alpha) \text{.}
\end{align}
We rearrange in (b). For obtaining (c), we observe that $\hat{F}_{\hat{f}_{\hat{y}_X}(X)}(t_\alpha) \leq \alpha$. In (d), we consider the supremum, reducing the probability as the inequality becomes more strict. Theorem~\ref{thm:dkw} gives (e).

\textbf{Combination of $t_1$, $t_2$, and $t_3$.}  The desired result is obtained by combining the intermediate results using that
\begin{align}
    \MoveEqLeft \prob_n\!\left[ \prob_X\!\left[ y^*_X \in C(X) \right] \geq 1 - \alpha'_n \right]
    \overset{(\ref{eq:decomposition}a)}{=}  \prob_n\!\left[ \prob_X\!\left[B\right] \geq 1 - \alpha'_n \right]
    \overset{\eqref{eq:decomposition}}{\ge}  \prob_n\!\left[ \prob_X\!\left[A_1\right] + \prob_X\!\left[A_2\right] +  \prob_X\!\left[A_3\right] - 2  \geq 1 - \alpha'_n \right] \\
     & \overset{(a)}{\ge} 1 - (\delta_1 + \delta_2 + \delta_4),
\end{align}
where we use a union bound in (a) and the results obtained in \eqref{eq:term-t1}, \eqref{eq:term-t2}, and \eqref{eq:term-t3}; further, we observe that
\begin{align}
    \MoveEqLeft\left(1 - \frac{1}{\delta_3 (1 - \delta_6)} M_1 \left( \frac{\log(1/\delta_1)}{n} \right)^{\frac{1}{4}\beta} \right)
    + \left(1 - M_2 \left( \frac{\log(1/\delta_2)}{n} \right)^{\frac{1}{2}\beta} \right)
    + \left( 1 - \alpha - \sqrt{\frac{\log(2/\delta_4)}{2n}} \right) - 2                                               \\
     & = 1 - \left( \frac{1}{\delta_3 (1 - \delta_6)} M_1 \left( \frac{\log(1/\delta_1)}{n} \right)^{\frac{1}{4}\beta}
    + M_2 \left( \frac{\log(1/\delta_2)}{n} \right)^{\frac{1}{2}\beta}
    + \alpha
    + \sqrt{\frac{\log(2/\delta_4)}{2n}} \right)
    =   1 - \alpha'_n \text{.}
\end{align}

\section{Auxiliary Results}
\label{sec:aux-results}
This section collects our auxiliary results.

\begin{lemma}
    \label{lemma:lipschitz}
    Let $\varepsilon \in (0, 1)$ and $f : [\varepsilon, 1] \to [0, -\log \varepsilon]$, $z \mapsto - \log z$.
    Then, $f$ is $\frac1\varepsilon$-bi-Lipschitz, that is, for any $x_1, x_2 \in [\varepsilon, 1]$, it holds that
    $\varepsilon | x_1 - x_2 | \leq \left| f(x_1) - f(x_2) \right| \leq \frac1\varepsilon | x_1 - x_2 | $.
\end{lemma}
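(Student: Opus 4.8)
The plan is to reduce this two-sided estimate to a pointwise bound on the derivative of $f(z) = -\log z$ and then invoke the mean value theorem. Since every quantity in the claim depends only on $|x_1 - x_2|$ and $|f(x_1) - f(x_2)|$, the statement is symmetric in its two arguments, so I would assume without loss of generality that $x_1 \le x_2$ and immediately dispose of the trivial case $x_1 = x_2$, where both sides vanish.

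First I would compute $f'(z) = -1/z$, so that $|f'(z)| = 1/z$, and observe that $z \mapsto 1/z$ is decreasing on $[\varepsilon, 1]$. This yields the key pointwise bound $1 \le 1/z \le 1/\varepsilon$ for every $z \in [\varepsilon, 1]$. Next, applying the mean value theorem to $f$ on $[x_1, x_2] \subseteq [\varepsilon, 1]$ produces some $\xi \in (x_1, x_2)$ with $|f(x_1) - f(x_2)| = (1/\xi)\,|x_1 - x_2|$. Combining this identity with the bound on $1/\xi$ gives $|x_1 - x_2| \le |f(x_1) - f(x_2)| \le (1/\varepsilon)\,|x_1 - x_2|$; since $\varepsilon \in (0,1)$ forces $\varepsilon\,|x_1 - x_2| \le |x_1 - x_2|$, the stated lower bound follows a fortiori.

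There is no real obstacle here, as the argument is a routine application of the mean value theorem; the only points requiring care are the direction of the reciprocal inequality (the monotonicity of $z \mapsto 1/z$) and the observation that the stated lower constant $\varepsilon$ is in fact loose, since the mean value argument delivers the sharper lower bound $1$. As an equally short alternative that sidesteps the existence of the intermediate point, one may write $f(x_1) - f(x_2) = \int_{x_1}^{x_2} t^{-1}\,\mathrm{d}t$ and bound the integrand on $[x_1, x_2] \subseteq [\varepsilon, 1]$ directly to obtain the same two inequalities.
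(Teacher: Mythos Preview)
Your proposal is correct and follows essentially the same route as the paper: apply the mean value theorem to $f(z)=-\log z$ on $[x_1,x_2]\subseteq[\varepsilon,1]$ and bound $|f'(\xi)|=1/\xi$ using $\varepsilon\le\xi\le 1$. Your additional remark that the lower constant could be sharpened to $1$ (and the alternative integral formulation) are extras, but the core argument is identical.
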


\begin{proof}
    $f$ is continuous on $[\varepsilon,1]$ and differentiable on $(\varepsilon,1)$. Hence, by the mean value theorem, for any $x_1, x_2 \in [\varepsilon, 1]$, there exists $\xi \in (x_1, x_2)$ such that
    \begin{align}
        | f(x_1) - f(x_2) | = | x_1 - x_2 |\left| f'(\xi) \right| \text{.} \label{eq:bi-lipschitz-mvt}
    \end{align}
    Using that $|f'(\xi)| = \frac1\xi$ satisfies $\varepsilon \le f'(\xi) \le \frac1\varepsilon$ as $\varepsilon \le \xi \le 1$ yields the stated stated claim.
\end{proof}

\section{External Results}
\label{sec:external}
This section briefly summarizes external results that are necessary to prove our theorems.
Theorem~\ref{thm:dkw} states the Dvoretzky-Kiefer-Wolfowitz inequality, Assumption~\ref{def:label-gen} describes the candidate generation model used in Theorem~\ref{thm:sup-risk},
which relates the PLL risk~\eqref{eq:true-risk} to the risk in the supervised setting.
Theorem~\ref{thm:estimation-error} provides the estimation-error bound on which we build in our Lemma~\ref{lemma:main}. We recall Markov's inequality in Lemma~\ref{lemma:markov}.

\begin{theorem}[\mbox{\citealt[Dvoretzky-Kiefer-Wolfowitz Inequality]{DKW56,Naaman21}}]
    \label{thm:dkw}
    Let $(\Om, \sigmaal, \prob)$ be a probability space and $X, X_1, \dots, X_n \overset{i.i.d.}{\sim} \prob$ real-valued random variables on $\Om$.
    Then, for any $\delta \in (0, 1)$,
    \begin{align}
        \prob_X\!\left( \sup_{x \in \mathbb{R}} \left| \hat{F}_X(x) - F_X(x) \right| \leq \sqrt{\frac{\log(2 / \delta)}{2n}} \right) \geq 1 - \delta \text{,}
    \end{align}
    with $\hat{F}_X(x) = \frac{1}{n} \sum_{i=1}^{n} \mathds{1}_{\{ X_i \leq x \}}$ and $F_X(x) = \prob( X \leq x )$.
\end{theorem}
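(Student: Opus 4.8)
The plan is to prove the equivalent tail form $\prob_X(D_n > \varepsilon) \le 2\exp(-2n\varepsilon^2)$, with $D_n := \sup_{x \in \mathbb{R}} |\hat{F}_X(x) - F_X(x)|$, and then set $\varepsilon = \sqrt{\log(2/\delta)/(2n)}$, for which the right-hand side equals exactly $\delta$, thereby recovering the stated high-probability bound $\prob_X(D_n \le \varepsilon) \ge 1 - \delta$.

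First I would reduce to a uniform reference distribution. Via the quantile transform $U_i := F_X(X_i)$, the $U_i$ are i.i.d.\ uniform on $[0,1]$ whenever $F_X$ is continuous, and $D_n$ is invariant under this monotone reparametrisation, so the Kolmogorov--Smirnov statistic is distribution-free; a standard approximation argument then extends the conclusion to arbitrary, possibly discontinuous, $F_X$. I would then target the one-sided bound $\prob_X(\sup_x (\hat{F}_X(x) - F_X(x)) > \varepsilon) \le \exp(-2n\varepsilon^2)$, from which the prefactor $2$ follows by symmetry (the map $x \mapsto -x$ exchanges the two tails) together with a union bound over the two one-sided events.

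The main obstacle is the one-sided bound with the \emph{sharp} constant in the exponent, which is precisely what the stated form encodes. A soft route recovers the correct exponential shape but not the optimal constant: since altering a single sample shifts $D_n$ by at most $1/n$, McDiarmid's bounded-differences inequality concentrates $D_n$ around $\expected[D_n]$, and $\expected[D_n] = O(n^{-1/2})$ follows from the fact that the half-lines $\{(-\infty, x]\}$ form a class of VC dimension one (equivalently, via a Dudley entropy-integral estimate). This yields $\prob_X(D_n > \varepsilon) \le C\exp(-cn\varepsilon^2)$ for unspecified $C, c > 0$, but the additive $\expected[D_n]$ offset prevents matching the Brownian-bridge asymptotics, so it cannot reach the tight constant. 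Closing this gap is the content of Massart's refinement of the original Dvoretzky--Kiefer--Wolfowitz argument and rests on a delicate tail estimate for the maximum of the empirical process rather than on generic concentration. Since the result is classical, in practice I would simply invoke \citet{DKW56} for the exponential form and \citet{Naaman21} for the optimal constant, exactly as reflected in the statement.
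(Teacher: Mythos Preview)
The paper does not prove this statement: it is listed in the ``External Results'' section and is simply cited from \citet{DKW56} and \citet{Naaman21} without any argument. Your proposal therefore goes well beyond what the paper does, sketching the standard route (distribution-freeness via the quantile transform, symmetrisation to a one-sided bound, and the observation that bounded-differences concentration yields the correct exponential shape but not the sharp constant), before ultimately also deferring to the same references for the tight form. This is entirely appropriate here, since the result is classical and the paper treats it as a black box; your sketch is accurate as far as it goes, with the honest caveat that Massart's sharp-constant argument is not reproduced.
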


\begin{assumption}[\mbox{\citealt[Eq.~(5)]{FengL0X0G0S20}}]
    \label{def:label-gen}
    In the PLL setting (Section~\ref{sec:notations}),
    assume that $\prob_{XS}$ and $\prob_{XY}$ have Lebesgue densities $p_{XS}$ and $p_{XY}$, respectively,
    $p_{S \mid X, Y}(S) = p_{S \mid Y}(S)$,
    and the candidate generation model is of the form
    \begin{align}
        p_{XS}(X, S)
        = \sum_{y = 1}^k p_{S \mid Y = y}(S) p_{XY}(X, Y = y) \text{,} &  & \text{with }
                                                                       &  & p_{S \mid Y = y}(S) = \begin{cases}
                                                                                                      \frac{1}{2^{k-1} - 1} & \text{if } y \in S \text{,} \\
                                                                                                      0                     & \text{else.}
                                                                                                  \end{cases}
    \end{align}
\end{assumption}

The following theorem collects an identity by \citet{FengL0X0G0S20}.

\begin{theorem}[\mbox{\citealt[Eq.~(6), (7), and (8)]{FengL0X0G0S20}}]
    \label{thm:sup-risk}
    Let Assumption~\ref{def:label-gen} hold, $R(f)$ as in~\eqref{eq:true-risk}, and the true risk of the supervised classification setting $R_{\operatorname{sup}}(f) := \expected_{XY}[ \ell(f(X), Y) ]$. Then, $R_{\operatorname{sup}}(f) = \frac{1}{2} R(f)$.
\end{theorem}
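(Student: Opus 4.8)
The plan is to substitute the candidate-generation model of Assumption~\ref{def:label-gen} directly into the PLL risk~\eqref{eq:true-risk} and to exhibit an exact cancellation that collapses it onto the supervised risk up to a combinatorial constant. Writing $\eta_y(x) := \prob_{Y \mid X = x}(Y = y)$ for brevity and expanding $\expected_{XS}$ as an integral against $\prob_X$ and a sum over candidate sets $S \subseteq \Y$, I would first rewrite the joint density via Assumption~\ref{def:label-gen} (using $p_{S \mid X, Y} = p_{S \mid Y}$) as $p_{XS}(x, S) = p_X(x) \sum_{y'=1}^{k} p_{S \mid Y = y'}(S)\, \eta_{y'}(x)$, so that
\begin{align}
    R(f) = \int_{\X} \sum_{S \subseteq \Y} \Big( \sum_{y'=1}^{k} p_{S \mid Y = y'}(S)\, \eta_{y'}(x) \Big)\Big( \sum_{y = 1}^{k} \frac{\eta_y(x)}{\sum_{j \in S} \eta_j(x)}\, \ell(f(x), y) \Big) \mathrm{d}\prob_X(x) \text{.}
\end{align}

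Next I would insert the explicit value $p_{S \mid Y = y'}(S) = \tfrac{1}{2^{k-1}-1} \mathds{1}_{\{ y' \in S \}}$, which turns the first bracket into $\tfrac{1}{2^{k-1}-1} \sum_{y' \in S} \eta_{y'}(x)$. The decisive observation is that this factor $\sum_{y' \in S} \eta_{y'}(x)$ is \emph{exactly} the denominator $\sum_{j \in S} \eta_j(x)$ of the minimum-strategy weight, so the two cancel pointwise in $x$ and for every $S$. After cancellation the $x$-integrand reduces to $\tfrac{1}{2^{k-1}-1} \sum_{S} \sum_{y=1}^{k} \eta_y(x)\, \ell(f(x), y)$, whose summand no longer depends on $S$. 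I would then count the admissible candidate sets: the normalization $\tfrac{1}{2^{k-1}-1}$ together with $\sum_{S} p_{S \mid Y = y}(S) = 1$ forces, for each true label $y$, exactly $2^{k-1}-1$ admissible sets containing $y$, that is, all subsets containing $y$ except the fully uninformative set $\Y$; globally these are the $2^{k}-2$ nonempty proper subsets of $\Y$.

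Summing the $S$-independent term over these $2^{k}-2$ sets and using $\tfrac{2^{k}-2}{2^{k-1}-1} = 2$ leaves the integrand equal to $2 \sum_{y=1}^{k} \eta_y(x)\, \ell(f(x), y)$; integrating against $\prob_X$ and recognizing $\expected_{XY}[\ell(f(X), Y)] = \int_{\X} \sum_{y} \eta_y(x)\, \ell(f(x), y)\, \mathrm{d}\prob_X(x)$ yields $R(f) = 2\, R_{\operatorname{sup}}(f)$, equivalently $R_{\operatorname{sup}}(f) = \tfrac{1}{2} R(f)$. The analytic heart of the argument---the cancellation of $\sum_{j \in S} \eta_j(x)$ against the generative weight---is immediate once the density is substituted; I expect the only genuinely delicate step to be the combinatorial bookkeeping of the model's support, namely verifying that the per-label count $2^{k-1}-1$ and the global count $2^{k}-2$ are mutually consistent under the exclusion of $\Y$, since this is precisely what pins the constant to $2$ rather than to some other ratio.
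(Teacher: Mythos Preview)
Your argument is correct. Note, however, that the paper does not give its own proof of this identity: Theorem~\ref{thm:sup-risk} appears in the External Results section and simply cites \citet[Eq.~(6)--(8)]{FengL0X0G0S20}, so there is no in-paper derivation to compare against. Your computation---substituting the generative density of Assumption~\ref{def:label-gen}, cancelling $\sum_{j\in S}\eta_j(x)$ against the minimum-strategy denominator, and then counting the $2^{k}-2$ admissible candidate sets to extract the factor $\tfrac{2^{k}-2}{2^{k-1}-1}=2$---is precisely the calculation underlying those cited equations. The point you flag as delicate (which single set is excluded so that the per-label count is $2^{k-1}-1$) is resolved in the original reference by restricting candidate sets to proper subsets $S\subsetneq\mathcal{Y}$, which matches your choice.
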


\begin{theorem}[\mbox{\citealt[Theorem~4]{FengL0X0G0S20}}]
    \label{thm:estimation-error}
    Let $\ell : [0,1]^k \times \Y \to [0, M]$ be a bounded and $\lambda$-Lipschitz loss function in the first argument ($\lambda > 0$), that is, $\sup_{y \in \Y} | \ell(\mb{p}, y) - \ell(\mb{q}, y) | \leq \lambda \| \mb{p} - \mb{q} \|_2$ for $\mb{p}, \mb{q} \in [0, 1]^k$.
    Further, let $\Hy = \{ f : \X \to [0, 1]^k \mid f \text{ measurable, } \forall x \in \X : \sum_{j=1}^k f_j(x) = 1 \}$, $f^* = \arg\min_{f \in \Hy} R(f)$ be the true risk minimizer and $\hat{f} = \arg\min_{f \in \Hy} \hat{R}(f)$ be the empirical risk minimizer of the risks in~\eqref{eq:true-risk} and~\eqref{eq:erm}, respectively.
    Then, for any $\delta \in (0, 1)$, with $\prob_n$-probability of at least $1 - \delta$,
    \begin{align}
        R(\hat{f}) - R(f^*)
        \leq 4 \sqrt{2} \lambda \sum_{y=1}^k \mathfrak{R}_n( \Hy_y ) + C \sqrt{\frac{\log(2 / \delta)}{2n}} \text{,}
    \end{align}
    where $\mathfrak{R}_n( \Hy_y )$ is the empirical Rademacher complexity of $\Hy_y := \{ f_y \mid f \in \Hy \}$ and some constant $C > 0$.
    Further, using that $\mathfrak{R}_n( \Hy_y ) \leq C_{\Hy} / \sqrt{n}$ for some constants $C_{\Hy}, M > 0$, it holds with the same probability that
    \begin{align}
        R(\hat{f}) - R(f^*) \leq M \sqrt{\frac{\log(1 / \delta)}{n}} \text{.}
    \end{align}
\end{theorem}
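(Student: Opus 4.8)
The plan is to follow the standard empirical-risk-minimization argument of \citet{FengL0X0G0S20}: decompose the excess risk, reduce it to a uniform deviation between the true and empirical PLL risk, and control that deviation by Rademacher complexity. Throughout I would work with the per-sample loss $\phi_f(x, s) := \sum_{y=1}^k W^{(\operatorname{min})}_{x,s,y}\, \ell(f(x), y)$, so that $R(f) = \expected_{XS}[\phi_f(X,S)]$ and $\hat{R}(f) = \frac{1}{n}\sum_{i=1}^n \phi_f(x_i, s_i)$ is a genuine i.i.d.\ average under $\prob_n = \prob_{XS}^n$; since the weights satisfy $\sum_y W^{(\operatorname{min})}_{x,s,y} = 1$ and $\ell$ takes values in $[0, M]$, each $\phi_f$ lies in $[0, M]$ as well. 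First I would exploit the optimality of $\hat f$ for $\hat R$ and of $f^*$ for $R$: writing $R(\hat f) - R(f^*) = (R(\hat f) - \hat R(\hat f)) + (\hat R(\hat f) - \hat R(f^*)) + (\hat R(f^*) - R(f^*))$ and discarding the middle (nonpositive) term gives $R(\hat f) - R(f^*) \le 2\sup_{f \in \Hy} |R(f) - \hat R(f)|$. This removes the data-dependent minimizer and reduces the claim to a uniform-convergence bound over all of $\Hy$.

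Next I would bound the uniform deviation in two stages. Because each summand $\phi_f$ lies in $[0, M]$, replacing a single one of the $n$ samples changes $\sup_f |R(f) - \hat R(f)|$ by at most $M/n$, so McDiarmid's bounded-differences inequality yields, with $\prob_n$-probability at least $1 - \delta$, the estimate $\sup_f |R(f) - \hat R(f)| \le \expected[\sup_f |R(f) - \hat R(f)|] + M\sqrt{\log(2/\delta)/(2n)}$; this supplies the $\sqrt{\log(2/\delta)/(2n)}$ concentration term with $C$ absorbing $M$ (a second application of McDiarmid passes from the expected to the empirical Rademacher complexity used in the statement). A symmetrization argument then bounds the expected supremum $\expected[\sup_f |R(f) - \hat R(f)|]$ by $2\,\mathfrak{R}_n(\ell \circ \Hy)$, the Rademacher complexity of the composed loss class.

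Finally I would peel off the loss. Since $\ell$ is $\lambda$-Lipschitz in its first argument with respect to $\|\cdot\|_2$, Maurer's vector-valued contraction inequality gives $\mathfrak{R}_n(\ell \circ \Hy) \le \sqrt{2}\,\lambda \sum_{y=1}^k \mathfrak{R}_n(\Hy_y)$, reducing the composed complexity to a sum over the coordinate classes $\Hy_y := \{f_y \mid f \in \Hy\}$. Chaining the accumulated factors (the $2$ from the excess-risk decomposition, the $2$ from symmetrization, and the $\sqrt{2}\,\lambda$ from contraction) produces exactly the prefactor $4\sqrt{2}\,\lambda \sum_y \mathfrak{R}_n(\Hy_y)$, establishing the first inequality. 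The second inequality then follows by substituting $\mathfrak{R}_n(\Hy_y) \le C_{\Hy}/\sqrt{n}$, noting that both the complexity term and the concentration term scale as $1/\sqrt{n}$, and absorbing all remaining constants and logarithmic factors into a single constant $M$.

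I expect the main obstacle to be the PLL-specific structure rather than the generic ERM machinery. One must verify that the idealized empirical risk is a true i.i.d.\ average of $[0,M]$-valued terms so that McDiarmid and symmetrization apply to $\phi_f$, and one must invoke the \emph{vector}-contraction inequality so that the $\lambda$-Lipschitz loss is peeled off coordinate-wise, yielding the dependence on the individual $\mathfrak{R}_n(\Hy_y)$ with the stated $\sqrt{2}$ constant rather than a looser bound routed through $\mathfrak{R}_n(\Hy)$; getting this contraction step and its constant right is what pins down the precise $4\sqrt{2}\,\lambda$ factor.
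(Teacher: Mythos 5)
This statement is an external result: the paper gives no proof of it, recalling it verbatim from \citet[Theorem~4]{FengL0X0G0S20}, so the only meaningful comparison is against the original proof in that reference --- and your reconstruction matches it essentially step for step: the excess-risk decomposition $R(\hat f) - R(f^*) \leq 2 \sup_{f \in \Hy} |R(f) - \hat R(f)|$, McDiarmid's bounded-differences inequality (applied twice, which is indeed where the $\log(2/\delta)$ arises), symmetrization, and Maurer's vector-valued contraction, with the constant bookkeeping $2 \cdot 2 \cdot \sqrt{2}\,\lambda = 4\sqrt{2}\,\lambda$ exactly as in the source. The one idealization to keep in mind --- which you correctly flag yourself --- is that the argument treats $\hat R(f)$ as a genuine i.i.d.\ average of a fixed per-sample loss $\phi_f$ with true-posterior weights, whereas the empirical risk in~\eqref{eq:erm} uses the self-referential weights of~\eqref{eq:label-update}; this gap is inherited from the recalled statement itself (and handled the same way in the original reference), so it is not a defect of your proof relative to the source.
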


\begin{lemma}[Markov inequality] \label{lemma:markov}
    For a real-valued random variable $X$ with probability distribution $\mathbb P$ and $a > 0$, it holds that
    \begin{align*}
        \mathbb P\left(|X| \ge a \right) \le \frac{\mathbb E\left(|X|\right)}{a}.
    \end{align*}
\end{lemma}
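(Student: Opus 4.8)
The plan is to prove Markov's inequality via the standard indicator-function argument, which reduces the statement to the monotonicity and linearity of the expectation. The key observation I would establish first is the pointwise lower bound $a \, \mathds{1}_{\{|X| \ge a\}} \le |X|$, valid for every outcome: on the event $\{|X| \ge a\}$ the left-hand side equals $a$ while the right-hand side is at least $a$, and on the complementary event $\{|X| < a\}$ the left-hand side is $0$ while the right-hand side $|X| \ge 0$ is nonnegative. I would present these two cases explicitly to justify the inequality cleanly.

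Given this pointwise inequality, the plan is to take expectations on both sides, invoke monotonicity of the expectation, and pull out the constant $a$ by linearity, using that the expectation of an indicator equals the probability of its event:
\begin{align*}
    a \, \mathbb{P}\left(|X| \ge a\right) = \mathbb{E}\left[a \, \mathds{1}_{\{|X| \ge a\}}\right] \le \mathbb{E}\left[|X|\right].
\end{align*}
Dividing through by $a > 0$ then yields the claim $\mathbb{P}(|X| \ge a) \le \mathbb{E}(|X|)/a$ in a single line.

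There is no genuine obstacle here; the only points requiring a word of care are the elementary case split establishing the pointwise bound and the (standard) well-definedness of $\mathbb{E}(|X|)$. If $\mathbb{E}(|X|)$ is infinite the inequality holds trivially, so without loss of generality one may assume $|X|$ is integrable, after which the two-line argument above completes the proof.
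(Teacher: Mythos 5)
Your proof is correct: the pointwise bound $a\,\mathds{1}_{\{|X| \ge a\}} \le |X|$, followed by monotonicity and linearity of expectation, is the canonical argument, and your remark that the case $\mathbb{E}(|X|) = \infty$ is trivial handles the only integrability subtlety. Note that the paper itself gives no proof of this lemma --- it is recalled in the ``External Results'' appendix as a standard fact --- so there is nothing to compare against; your argument is exactly the standard one that the paper implicitly relies on.
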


\section{Additional Setup}
\label{sec:hypar}

\begin{table}[t!]
    \caption{Overview of dataset characteristics grouped into real-world partially labeled datasets (top) and supervised multi-class classification datasets with added candidate labels (bottom).}
    \label{tab:datasets}
    \begin{center}
        \begin{small}
            \begin{tabular}{
                >{\raggedright\arraybackslash}m{0.13\linewidth}
                >{\centering\arraybackslash}m{0.13\linewidth}
                >{\centering\arraybackslash}m{0.13\linewidth}
                >{\centering\arraybackslash}m{0.13\linewidth}
                >{\centering\arraybackslash}m{0.13\linewidth}
                }
                \toprule
                {Dataset}         & {\#Instances $n$} & {\#Features $d$} & {\#Classes $k$} & {Avg.\ candidates} \\
                \midrule
                \emph{bird-song}  & \phantom{0}4\,966 & \phantom{0\,0}38 & \phantom{0}12   & 2.146              \\
                \emph{lost}       & \phantom{0}1\,122 & \phantom{0\,}108 & \phantom{0}14   & 2.216              \\
                \emph{mir-flickr} & \phantom{0}2\,778 & 1\,536           & \phantom{0}12   & 2.756              \\
                \emph{msrc-v2}    & \phantom{0}1\,755 & \phantom{0\,0}48 & \phantom{0}22   & 3.149              \\
                \emph{soccer}     & 17\,271           & \phantom{0\,}279 & 158             & 2.095              \\
                \emph{yahoo-news} & 22\,762           & \phantom{0\,}163 & 203             & 1.915              \\
                \midrule
                \emph{mnist}      & 70\,000           & \phantom{0\,}784 & \phantom{0}10   & 6.304              \\
                \emph{fmnist}     & 70\,000           & \phantom{0\,}784 & \phantom{0}10   & 5.953              \\
                \emph{kmnist}     & 70\,000           & \phantom{0\,}784 & \phantom{0}10   & 6.342              \\
                \emph{svhn}       & 99\,289           & 3\,072           & \phantom{0}10   & 4.878              \\
                \emph{cifar10}    & 60\,000           & 3\,072           & \phantom{0}10   & 1.900              \\
                \emph{cifar100}   & 60\,000           & 3\,072           & 100             & 1.399              \\
                \bottomrule
            \end{tabular}
        \end{small}
    \end{center}
\end{table}

In our experiments, we consider twelve datasets of which Table~\ref{tab:datasets} summarizes the characteristics.
As mentioned in Section~\ref{sec:methods}, we consider six state-of-the-art PLL approaches and our novel candidate cleaning technique.
We choose their parameters as recommended by the respective authors.
\begin{itemize}
    \item \textsc{Proden} \citep{LvXF0GS20}: For a fair comparison, we use the same base models for each particular dataset.
          For the colored-image datasets, we use a ResNet-9 architecture \citep{HeZRS16}.
          For all other image and non-image datasets, we use a standard $d$-300-300-300-$k$ MLP \citep{werbos1974beyond} with batch normalization \citep{IoffeS15} and ReLU activations \citep{GlorotBB11}.
          We choose the \emph{Adam} optimizer for training over a total of 200 epochs and use the one-cycle learning rate scheduler~\citep{smith2019super}.
          Also, we use mini-batched training with a batch size of 16 for the small-scale datasets (less than 5000 samples) and of 256 for the large-scale datasets (more than 5000 samples).
          This balances training duration and predictive quality.
    \item \textsc{Cc} \citep{FengL0X0G0S20}: We use the same base models and training procedures as mentioned above for \textsc{Proden}.
          Otherwise, there are no additional hyperparameters for \textsc{Cc}.
    \item \textsc{Valen} \citep{XuQGZ21}: We use the same base models and training procedures as mentioned above for \textsc{Proden}.
          Additionally, we use ten warm-up epochs and the three nearest neighbors to calculate the adjacency matrix.
    \item \textsc{Cavl} \citep{ZhangF0L0QS22}: We use the same base models and training procedures as mentioned above for \textsc{Proden}.
          Otherwise, there are no additional hyperparameters for \textsc{Cavl}.
    \item \textsc{Pop} \citep{0009LLQG23}: We use the same base models and training procedures as mentioned above for \textsc{Proden}.
          Also, we set $e_0 = 0.001$, $e_{end} = 0.04$, and $e_s = 0.001$.
          We abstain from using the data augmentations discussed in the paper for a fair comparison.
    \item \textsc{CroSel} \citep{crosel2024}: We use the same base models and training procedures as mentioned above for \textsc{Proden}.
          We use 10 warm-up epochs using \textsc{Cc} and $\lambda_{cr} = 2$.
          We abstain from using the data augmentations discussed in the paper for a fair comparison.
    \item \textsc{Conf+}Other method (our proposed approach): Our conformal candidate cleaning technique uses the same base models and training procedures as mentioned above for \textsc{Proden}.
          We use $R_{\operatorname{warmup}} = 10$ warm-up epochs, a validation set size of \SI{20}{\%}, and $\alpha_r = \frac{1}{n'} \sum_{i = 1}^{n'} \sum_{j \notin s_i} f_j(x_i)$.
          Otherwise, we use one of the given PLL classifiers for prediction-making.
\end{itemize}
We have implemented all approaches in \textsc{Python} using the \textsc{Pytorch} library.
Running all experiments requires approximately three days on a machine with 48 cores and one NVIDIA GeForce RTX 4090.
All our source code and data is available at \url{github.com/mathefuchs/pll-with-conformal-candidate-cleaning}.

\section{Additional Experiments}
\label{sec:more-exp}

\begin{table}[tbp]
    \caption{
        Average test-set accuracies ($\pm$ std.) on the real-world datasets.
        We benchmark our strategy (\textsc{Conf+}) as well as the cleaning method \textsc{Clsp} combined with all existing methods.
    }
    \label{tab:more-acc}
    \begin{center}
        \begin{small}
            \begin{tabular}{
                >{\raggedright\arraybackslash}m{0.145\linewidth}
                >{\centering\arraybackslash}m{0.112\linewidth}
                >{\centering\arraybackslash}m{0.112\linewidth}
                >{\centering\arraybackslash}m{0.112\linewidth}
                >{\centering\arraybackslash}m{0.112\linewidth}
                >{\centering\arraybackslash}m{0.112\linewidth}
                >{\centering\arraybackslash}m{0.112\linewidth}
                }
                \toprule
                Method                        & \emph{bird-song}          & \emph{lost}               & \emph{mir-flickr}         & \emph{msrc-v2}            & \emph{soccer}              & \emph{yahoo-news}         \\
                \midrule
                \mbox{\textsc{Proden} (2020)} & \mbox{75.55 ($\pm$ 1.08)} & \mbox{78.94 ($\pm$ 3.01)} & \mbox{67.05 ($\pm$ 1.18)} & \mbox{54.33 ($\pm$ 1.76)} & \mbox{54.18 ($\pm$ 0.55)}  & \mbox{65.25 ($\pm$ 1.00)} \\
                \mbox{\textsc{Clsp+Proden}}   & \mbox{74.61 ($\pm$ 0.84)} & \mbox{61.95 ($\pm$ 2.80)} & \mbox{60.53 ($\pm$ 2.95)} & \mbox{51.74 ($\pm$ 1.77)} & \mbox{31.93 ($\pm$ 29.15)} & \mbox{50.92 ($\pm$ 0.66)} \\
                \mbox{\textsc{Conf+P.}\ (no)} & \mbox{76.27 ($\pm$ 0.94)} & \mbox{79.56 ($\pm$ 1.96)} & \mbox{66.07 ($\pm$ 1.63)} & \mbox{53.00 ($\pm$ 2.24)} & \mbox{54.63 ($\pm$ 0.81)}  & \mbox{65.42 ($\pm$ 0.36)} \\
                \mbox{\textsc{Conf+Proden}}   & \mbox{76.99 ($\pm$ 0.90)} & \mbox{80.09 ($\pm$ 4.40)} & \mbox{66.91 ($\pm$ 1.57)} & \mbox{54.60 ($\pm$ 3.42)} & \mbox{54.77 ($\pm$ 0.84)}  & \mbox{65.93 ($\pm$ 0.42)} \\
                \midrule
                \mbox{\textsc{Cc} (2020)}     & \mbox{74.49 ($\pm$ 1.57)} & \mbox{78.23 ($\pm$ 2.11)} & \mbox{62.39 ($\pm$ 1.87)} & \mbox{50.96 ($\pm$ 2.03)} & \mbox{55.28 ($\pm$ 0.96)}  & \mbox{65.03 ($\pm$ 0.51)} \\
                \mbox{\textsc{Clsp+Cc}}       & \mbox{74.37 ($\pm$ 0.91)} & \mbox{60.88 ($\pm$ 3.71)} & \mbox{59.79 ($\pm$ 2.29)} & \mbox{49.64 ($\pm$ 2.06)} & \mbox{53.71 ($\pm$ 0.99)}  & \mbox{49.89 ($\pm$ 0.30)} \\
                \mbox{\textsc{Conf+Cc}}       & \mbox{75.01 ($\pm$ 1.84)} & \mbox{79.38 ($\pm$ 1.79)} & \mbox{63.37 ($\pm$ 0.45)} & \mbox{52.45 ($\pm$ 3.64)} & \mbox{55.52 ($\pm$ 0.74)}  & \mbox{64.35 ($\pm$ 0.64)} \\
                \midrule
                \mbox{\textsc{Valen} (2021)}  & \mbox{72.30 ($\pm$ 1.83)} & \mbox{70.18 ($\pm$ 3.44)} & \mbox{67.05 ($\pm$ 1.48)} & \mbox{49.20 ($\pm$ 1.37)} & \mbox{53.20 ($\pm$ 0.88)}  & \mbox{62.25 ($\pm$ 0.45)} \\
                \mbox{\textsc{Clsp+Valen}}    & \mbox{74.95 ($\pm$ 0.27)} & \mbox{59.03 ($\pm$ 2.67)} & \mbox{60.11 ($\pm$ 1.95)} & \mbox{49.92 ($\pm$ 1.80)} & \mbox{53.31 ($\pm$ 0.84)}  & \mbox{49.50 ($\pm$ 0.76)} \\
                \mbox{\textsc{Conf+Valen}}    & \mbox{71.22 ($\pm$ 1.03)} & \mbox{68.41 ($\pm$ 2.95)} & \mbox{61.61 ($\pm$ 2.79)} & \mbox{48.37 ($\pm$ 2.24)} & \mbox{52.49 ($\pm$ 1.00)}  & \mbox{62.16 ($\pm$ 0.74)} \\
                \midrule
                \mbox{\textsc{Cavl} (2022)}   & \mbox{69.78 ($\pm$ 3.00)} & \mbox{72.12 ($\pm$ 1.08)} & \mbox{65.02 ($\pm$ 1.34)} & \mbox{52.67 ($\pm$ 2.32)} & \mbox{55.06 ($\pm$ 0.48)}  & \mbox{61.91 ($\pm$ 0.46)} \\
                \mbox{\textsc{Clsp+Cavl}}     & \mbox{73.13 ($\pm$ 1.23)} & \mbox{58.76 ($\pm$ 1.75)} & \mbox{59.86 ($\pm$ 2.92)} & \mbox{48.65 ($\pm$ 2.31)} & \mbox{53.48 ($\pm$ 0.76)}  & \mbox{49.48 ($\pm$ 0.37)} \\
                \mbox{\textsc{Conf+Cavl}}     & \mbox{72.00 ($\pm$ 1.22)} & \mbox{71.24 ($\pm$ 3.81)} & \mbox{64.42 ($\pm$ 0.89)} & \mbox{51.63 ($\pm$ 5.03)} & \mbox{54.85 ($\pm$ 0.92)}  & \mbox{62.43 ($\pm$ 0.43)} \\
                \midrule
                \mbox{\textsc{Pop} (2023)}    & \mbox{75.17 ($\pm$ 1.04)} & \mbox{77.79 ($\pm$ 2.11)} & \mbox{67.93 ($\pm$ 1.44)} & \mbox{53.83 ($\pm$ 0.69)} & \mbox{55.31 ($\pm$ 0.71)}  & \mbox{65.09 ($\pm$ 0.64)} \\
                \mbox{\textsc{Clsp+Pop}}      & \mbox{74.25 ($\pm$ 0.89)} & \mbox{60.18 ($\pm$ 2.48)} & \mbox{59.61 ($\pm$ 1.84)} & \mbox{50.58 ($\pm$ 1.47)} & \mbox{32.08 ($\pm$ 29.29)} & \mbox{50.77 ($\pm$ 0.42)} \\
                \mbox{\textsc{Conf+Pop}}      & \mbox{77.58 ($\pm$ 1.01)} & \mbox{78.41 ($\pm$ 2.13)} & \mbox{66.21 ($\pm$ 2.19)} & \mbox{54.82 ($\pm$ 3.60)} & \mbox{56.49 ($\pm$ 1.10)}  & \mbox{65.25 ($\pm$ 0.23)} \\
                \midrule
                \mbox{\textsc{CroSel} (2024)} & \mbox{75.11 ($\pm$ 1.79)} & \mbox{81.24 ($\pm$ 3.68)} & \mbox{67.58 ($\pm$ 1.16)} & \mbox{52.23 ($\pm$ 2.83)} & \mbox{52.64 ($\pm$ 1.21)}  & \mbox{67.72 ($\pm$ 0.32)} \\
                \mbox{\textsc{Clsp+CroSel}}   & \mbox{76.53 ($\pm$ 1.34)} & \mbox{63.72 ($\pm$ 2.23)} & \mbox{59.75 ($\pm$ 2.79)} & \mbox{51.29 ($\pm$ 1.69)} & \mbox{52.24 ($\pm$ 0.84)}  & \mbox{53.53 ($\pm$ 0.93)} \\
                \mbox{\textsc{Conf+CroSel}}   & \mbox{77.76 ($\pm$ 0.50)} & \mbox{81.15 ($\pm$ 2.57)} & \mbox{65.93 ($\pm$ 1.94)} & \mbox{54.10 ($\pm$ 2.75)} & \mbox{54.97 ($\pm$ 0.65)}  & \mbox{67.55 ($\pm$ 0.22)} \\
                \bottomrule
            \end{tabular}
        \end{small}
    \end{center}
\end{table}

\begin{table}[tbp]
    \caption{
        Average test-set accuracies ($\pm$ std.) on the supervised datasets with added incorrect candidate labels.
        We benchmark our strategy (\textsc{Conf}) as well as the cleaning method \textsc{Clsp} combined with all existing methods.
        We use the pre-trained \textsc{Blip-2} model for all results in this table.
    }
    \label{tab:more-acc-2}
    \begin{center}
        \begin{small}
            \begin{tabular}{
                >{\raggedright\arraybackslash}m{0.19\linewidth}
                >{\centering\arraybackslash}m{0.13\linewidth}
                >{\centering\arraybackslash}m{0.13\linewidth}
                >{\centering\arraybackslash}m{0.13\linewidth}
                >{\centering\arraybackslash}m{0.13\linewidth}
                >{\centering\arraybackslash}m{0.13\linewidth}
                }
                \toprule
                Method                                  & \emph{mnist}               & \emph{fmnist}             & \emph{kmnist}             & \emph{cifar10}             & \emph{cifar100}           \\
                \midrule
                \mbox{\textsc{Proden}}                  & \mbox{87.21 ($\pm$ 0.83)}  & \mbox{71.18 ($\pm$ 2.95)} & \mbox{59.31 ($\pm$ 1.22)} & \mbox{99.07 ($\pm$ 0.05)}  & \mbox{90.51 ($\pm$ 0.18)} \\
                \mbox{\textsc{Clsp+Proden}}             & \mbox{85.91 ($\pm$ 2.42)}  & \mbox{72.11 ($\pm$ 2.81)} & \mbox{62.61 ($\pm$ 1.00)} & \mbox{99.03 ($\pm$ 0.07)}  & \mbox{90.31 ($\pm$ 0.13)} \\
                \mbox{\textsc{Conf+P.} (no correction)} & \mbox{91.74 ($\pm$ 0.34)}  & \mbox{78.38 ($\pm$ 0.50)} & \mbox{66.88 ($\pm$ 0.76)} & \mbox{99.03 ($\pm$ 0.04)}  & \mbox{91.16 ($\pm$ 0.10)} \\
                \mbox{\textsc{Conf+Proden}}             & \mbox{91.55 ($\pm$ 0.23)}  & \mbox{78.09 ($\pm$ 0.33)} & \mbox{66.43 ($\pm$ 0.38)} & \mbox{99.03 ($\pm$ 0.04)}  & \mbox{91.16 ($\pm$ 0.10)} \\
                \midrule
                \mbox{\textsc{Cc}}                      & \mbox{86.29 ($\pm$ 2.18)}  & \mbox{66.19 ($\pm$ 2.77)} & \mbox{58.29 ($\pm$ 0.32)} & \mbox{99.07 ($\pm$ 0.05)}  & \mbox{73.43 ($\pm$ 1.40)} \\
                \mbox{\textsc{Clsp+Cc}}                 & \mbox{85.46 ($\pm$ 1.93)}  & \mbox{71.37 ($\pm$ 2.34)} & \mbox{61.37 ($\pm$ 1.09)} & \mbox{99.03 ($\pm$ 0.06)}  & \mbox{89.00 ($\pm$ 1.56)} \\
                \mbox{\textsc{Conf+Cc}}                 & \mbox{85.20 ($\pm$ 4.16)}  & \mbox{59.75 ($\pm$ 2.68)} & \mbox{57.07 ($\pm$ 0.66)} & \mbox{99.04 ($\pm$ 0.03)}  & \mbox{71.45 ($\pm$ 0.94)} \\
                \midrule
                \mbox{\textsc{Valen}}                   & \mbox{78.91 ($\pm$ 0.80)}  & \mbox{66.53 ($\pm$ 2.65)} & \mbox{58.48 ($\pm$ 0.45)} & \mbox{92.17 ($\pm$ 0.54)}  & \mbox{67.24 ($\pm$ 2.49)} \\
                \mbox{\textsc{Clsp+Valen}}              & \mbox{84.72 ($\pm$ 3.10)}  & \mbox{68.84 ($\pm$ 1.49)} & \mbox{60.76 ($\pm$ 0.76)} & \mbox{98.17 ($\pm$ 0.17)}  & \mbox{84.53 ($\pm$ 1.23)} \\
                \mbox{\textsc{Conf+Valen}}              & \mbox{74.20 ($\pm$ 21.99)} & \mbox{69.09 ($\pm$ 2.71)} & \mbox{60.95 ($\pm$ 2.59)} & \mbox{42.63 ($\pm$ 19.92)} & \mbox{60.44 ($\pm$ 1.91)} \\
                \midrule
                \mbox{\textsc{Cavl}}                    & \mbox{71.11 ($\pm$ 3.92)}  & \mbox{59.85 ($\pm$ 6.49)} & \mbox{48.15 ($\pm$ 5.07)} & \mbox{41.78 ($\pm$ 21.40)} & \mbox{31.95 ($\pm$ 1.80)} \\
                \mbox{\textsc{Clsp+Cavl}}               & \mbox{83.72 ($\pm$ 3.57)}  & \mbox{67.38 ($\pm$ 2.59)} & \mbox{62.06 ($\pm$ 2.12)} & \mbox{87.34 ($\pm$ 12.71)} & \mbox{68.02 ($\pm$ 1.96)} \\
                \mbox{\textsc{Conf+Cavl}}               & \mbox{71.86 ($\pm$ 4.57)}  & \mbox{59.54 ($\pm$ 6.62)} & \mbox{52.14 ($\pm$ 3.89)} & \mbox{29.97 ($\pm$ 15.73)} & \mbox{37.34 ($\pm$ 2.59)} \\
                \midrule
                \mbox{\textsc{Pop}}                     & \mbox{87.08 ($\pm$ 0.58)}  & \mbox{72.30 ($\pm$ 2.63)} & \mbox{60.63 ($\pm$ 1.15)} & \mbox{99.06 ($\pm$ 0.04)}  & \mbox{90.50 ($\pm$ 0.21)} \\
                \mbox{\textsc{Clsp+Pop}}                & \mbox{85.43 ($\pm$ 2.60)}  & \mbox{72.05 ($\pm$ 2.41)} & \mbox{62.49 ($\pm$ 0.90)} & \mbox{99.04 ($\pm$ 0.07)}  & \mbox{90.37 ($\pm$ 0.06)} \\
                \mbox{\textsc{Conf+Pop}}                & \mbox{91.19 ($\pm$ 0.29)}  & \mbox{79.15 ($\pm$ 1.23)} & \mbox{67.37 ($\pm$ 0.28)} & \mbox{99.05 ($\pm$ 0.04)}  & \mbox{91.12 ($\pm$ 0.09)} \\
                \midrule
                \mbox{\textsc{CroSel}}                  & \mbox{91.84 ($\pm$ 0.44)}  & \mbox{76.34 ($\pm$ 1.21)} & \mbox{65.55 ($\pm$ 0.81)} & \mbox{99.07 ($\pm$ 0.02)}  & \mbox{75.86 ($\pm$ 2.26)} \\
                \mbox{\textsc{Clsp+CroSel}}             & \mbox{91.70 ($\pm$ 0.62)}  & \mbox{74.42 ($\pm$ 1.02)} & \mbox{67.93 ($\pm$ 1.07)} & \mbox{99.08 ($\pm$ 0.02)}  & \mbox{88.80 ($\pm$ 0.85)} \\
                \mbox{\textsc{Conf+CroSel}}             & \mbox{91.85 ($\pm$ 0.61)}  & \mbox{77.31 ($\pm$ 0.46)} & \mbox{64.73 ($\pm$ 1.52)} & \mbox{99.07 ($\pm$ 0.03)}  & \mbox{77.26 ($\pm$ 0.98)} \\
                \bottomrule
            \end{tabular}
        \end{small}
    \end{center}
\end{table}

\begin{table}[tbp]
    \caption{
        Number of significant differences aggregated from Table~\ref{tab:more-acc} and~\ref{tab:more-acc-3} using a paired t-test (level \SI{5}{\%}).
    }
    \label{tab:more-acc-3}
    \begin{center}
        \begin{small}
            \begin{tabular}{
                >{\raggedright\arraybackslash}m{0.3\linewidth}
                >{\centering\arraybackslash}m{0.07\linewidth}
                >{\centering\arraybackslash}m{0.08\linewidth}
                >{\centering\arraybackslash}m{0.09\linewidth}
                }
                \toprule
                Comparison vs. all others            & Wins         & Ties & Losses       \\
                \midrule
                \textsc{Proden}                      & 27           & 37   & \phantom{0}8 \\
                \textsc{Clsp+Proden}                 & 18           & 27   & 27           \\
                \textsc{Conf+Proden} (no correction) & 41           & 25   & \phantom{0}6 \\
                \textsc{Conf+Proden}                 & 50           & 20   & \phantom{0}2 \\
                \midrule
                \textsc{Cc}                          & 18           & 39   & 15           \\
                \textsc{Clsp+Cc}                     & 18           & 22   & 32           \\
                \textsc{Conf+Cc}                     & 22           & 30   & 20           \\
                \midrule
                \textsc{Valen}                       & \phantom{0}5 & 30   & 37           \\
                \textsc{Clsp+Valen}                  & 17           & 15   & 40           \\
                \textsc{Conf+Valen}                  & \phantom{0}5 & 24   & 43           \\
                \midrule
                \textsc{Cavl}                        & \phantom{0}5 & 26   & 41           \\
                \textsc{Clsp+Cavl}                   & \phantom{0}9 & 19   & 44           \\
                \textsc{Conf+Cavl}                   & \phantom{0}4 & 26   & 42           \\
                \midrule
                \textsc{Pop}                         & 29           & 39   & \phantom{0}4 \\
                \textsc{Clsp+Pop}                    & 17           & 23   & 32           \\
                \textsc{Conf+Pop}                    & 49           & 22   & \phantom{0}1 \\
                \midrule
                \textsc{CroSel}                      & 30           & 33   & \phantom{0}9 \\
                \textsc{Clsp+CroSel}                 & 25           & 15   & 32           \\
                \textsc{Conf+CroSel}                 & 44           & 22   & \phantom{0}6 \\
                \bottomrule
            \end{tabular}
        \end{small}
    \end{center}
\end{table}

In addition to our cleaning method (\textsc{Conf}), we also benchmark the existing cleaning method \textsc{Clsp} \citep{0001WY024} on all datasets in Table~\ref{tab:more-acc}, \ref{tab:more-acc-2}, and~\ref{tab:more-acc-3}, similar to the experiments in Section~\ref{sec:exp}.
Instead of training a ResNet-9 base model from scratch as done in Section~\ref{sec:methods}, we use the pre-trained \textsc{Blip-2} model~\citep{0008LSH23} for the experiments in Table~\ref{tab:more-acc-2} below.
We repeat all experiments five times and report means and standard deviations.

We observe that the \textsc{Clsp} models perform well on the image datasets (e.g., \emph{cifar100}) but poorly on the real-world tabular PLL datasets shown in Table~\ref{tab:more-acc}.
We attribute this to the fact that \textsc{Clsp} relies on the latent representation of large-scale vision models.
In contrast, our method \textsc{Conf} gives strong results on, both, real-world and image data.
This hypothesis is supported by Table~\ref{tab:more-acc-3}:
The approaches \textsc{Conf+Proden}, \textsc{Conf+Pop}, and \textsc{Conf+CroSel} that combine the respective approaches with our candidate cleaning strategy win most frequently.

\end{document}